\title[
EDIT: Early Diffusion Inference Termination for dLLMs Based on Dynamics of Training Gradients
]{
EDIT: Early Diffusion Inference Termination for dLLMs\\
Based on Dynamics of Training Gradients
}
\definecolor{Black}{rgb}{0.0, 0.0, 0.0}
\definecolor{DarkGreen}{rgb}{0.10, 0.55, 0.10}
\definecolor{DeepSkyBlue3}{rgb}{0.0, 0.686, 0.843}
\definecolor{DarkTurquoise}{rgb}{0.0, 0.843, 0.843}
\definecolor{Cyan3}{rgb}{0.0, 0.843, 0.686}
\definecolor{LightSeaGreen}{rgb}{0.0, 0.686, 0.686}
\definecolor{RoyalBlue}{rgb}{0.20, 0.60, 0.86}
\definecolor{DeepSkyBlue}{rgb}{0.0, 0.686, 1.}
\definecolor{DodgerBlue}{rgb}{0.0, 0.529, 1.}
\definecolor{DodgerBlue2}{rgb}{0.0, 0.3725, 1.}
\definecolor{DodgerBlue3}{rgb}{0.0, 0.3725, 0.843}
\definecolor{DarkCyan}{rgb}{0.0, 0.54, 0.54}
\definecolor{Gray}{gray}{0.9}
\definecolor{ChromeYellow}{rgb}{1.0, 0.65, 0.0}
\definecolor{Gold}{rgb}{1.0, 0.843, 0.0}
\definecolor{Crimson}{rgb}{0.86, 0.08, 0.24}
\definecolor{IndianRed}{rgb}{1.0, 0.373, 0.529}
\definecolor{SunsetOrange}{rgb}{0.98, 0.37, 0.33}
\definecolor{DarkOrange}{rgb}{1.0, 0.529, 0.}
\definecolor{DarkYellow}{rgb}{0.9,0.7,0.0}
\newcommand{\ie}{\textit{i.e.}}
\newtheorem{assumption}[theorem]{Assumption}
\begin{document}

\maketitle

\vspace{-25pt}
\begin{abstract}
Diffusion-based large language models (dLLMs) refine token generations through iterative denoising, but answers often stabilize before all steps complete. We propose EDIT (Early Diffusion Inference Termination), an inference-time criterion that adaptively stops denoising once sufficient reasoning stability relative to training-time reasoning is detected. EDIT monitors the alignment between token activations and a reasoning map derived from AdamW-aggregated LoRA updates captured during supervised fine-tuning (SFT). During training, optimization dynamics generate rich metadata about parameter importance that in prior methods is typically discarded upon model release. We preserve this information as a compact representation of learned reasoning pathways. During inference, alignment scores are converted to a distribution over the tokens already unmasked at the current denoising step, and convergence is detected when KL divergence between consecutive steps falls below a threshold on the matched unmasked (visible) tokens.
Across reasoning benchmarks, EDIT reduces diffusion steps by 11.8\% to 68.3\% while preserving or improving accuracy in most settings, with approximately 0.02\% storage overhead (about 1.5-2 MB for all QKV modules across 32 blocks in an 8 GB model).
By utilizing training-gradient dynamics, our work opens a new research direction for reducing dLLM inference time and cost. \href{https://github.com/louisYen/EDIT}{Code Repository}
\end{abstract}

\section{Introduction}
\label{sec:introduction}

Modern language model deployment follows a wasteful paradigm: during training, optimization dynamics on gradients generate rich information about which parameters are critical for specific capabilities, yet this metadata is routinely discarded once training completes. We challenge this practice by demonstrating that training-time optimization trajectories contain valuable signals that can guide intelligent inference-time decisions, specifically enabling adaptive early inference termination for diffusion language models.

Diffusion-based language models (dLLMs) \cite{NieZYZOHZLWL25, ZhaoGZG25, ArriolaGCYQHSK25} represent a promising alternative to autoregressive generation of tokens, employing iterative denoising processes that progressively refine outputs. However, their inference remains computationally expensive due to the fixed number of denoising steps, even when high-quality outputs emerge early in the process. Current approaches to this challenge operate without knowledge of which model parameters drove learning during training, essentially making uninformed decisions during inference termination optimization.

We introduce EDIT (Early Diffusion Inference Termination), a method that utilizes training optimization metadata to identify opportunities for early inference termination. Our key insight is that the AdamW optimizer's \cite{LoshchilovH19} moment estimates during fine-tuning training encode which parameters consistently receive strong, directionally-aligned updates when learning reasoning tasks.
The effectiveness of EDIT is validated from the perspective of gradient convergence during inference, as shown in Figure \ref{fig:grad_analysis}.
These patterns, which we term \emph{AdamW evolution}, represent a map of the model's learned reasoning pathways. Rather than discarding this information when training is complete, we store it as compact metadata (requiring minimal additional storage) and use it to guide inference termination decisions.

\paragraph{Our Approach: Utilizing Training Metadata for Guiding Inference Termination.}
During supervised fine-tuning (SFT) on reasoning tasks, certain LoRA parameters receive consistent gradient signals across training steps, indicating their importance for encoding reasoning patterns. The AdamW optimizer naturally tracks this through its moment estimates, with the first moment capturing gradient direction and the second moment reflecting gradient stability. Parameters with large, stable updates become critical components of the learned reasoning pathways.
Traditional inference deployment discards this valuable information. EDIT preserves it by saving aggregated AdamW updates across training steps, creating a fingerprint of which parameters matter most for reasoning. At inference time, we compare current token activations against these preserved patterns using cosine similarity, assessing whether the model's current state aligns with its learned reasoning pathways. When this alignment stabilizes---indicating the model has reached its learned reasoning configuration---we can confidently terminate the diffusion inference process early.
Instead of relying on inference-time heuristics such as confidence or output stability, EDIT leverages training-time knowledge to terminate once key reasoning components are engaged.

\paragraph{Contributions.}
(1) We establish a new paradigm for early inference termination that leverages training metadata which is usually discarded in prior methods. (This approach opens future research directions in not only early inference termination, but also informing dynamic compute allocation, quality prediction, and other inference-time optimizations.)
(2) We provide a practical instantiation through EDIT, demonstrating that AdamW evolution patterns can reliably indicate when diffusion models have completed their core reasoning. Our method requires no architectural changes, adds minimal storage overhead, and integrates seamlessly with existing diffusion language models.
(3) We validate our approach across multiple reasoning benchmarks, showing inference speedups of 11.8\% to 68.3\% while maintaining or improving accuracy in most settings. These gains come purely from utilizing training information that already existed but was previously thrown away, highlighting the inefficiency of current practices.

\section{Preserving and Utilizing Training Metadata}
\label{sec:method}

We detail how EDIT captures optimization dynamics during training and leverages them for intelligent early termination during inference. Our approach consists of two phases: metadata extraction during fine-tuning (Section \ref{subsec:adamw_evolution}) and metadata-guided termination during inference (Section \ref{subsec:early_termination}).

\paragraph{Utilizing Training Metadata for Guiding Inference Termination.}
During supervised fine-tuning (SFT), some parameters receive strong, stable updates that encode core reasoning patterns, while others show weak or oscillating updates and contribute less. We track this distinction through the AdamW update history---what we call the \textit{AdamW evolution}---which forms a map of reasoning-relevant parameters. At inference, activations are compared against this map; when alignment with the learned reasoning pathways is reached, early termination is enabled with confidence.

\subsection{Capturing AdamW Evolution During Training}
\label{subsec:adamw_evolution}

We consider a pre-trained base model with LoRA (Low-Rank Adaptation) \cite{HuSWALWWC22} modules inserted into the Query ($Q$), Key ($K$), and Value ($V$) projections of each Transformer block. During SFT on reasoning tasks, only these LoRA parameters are updated. Each LoRA module consists of matrices $(A, B)$ where $A \in \mathbb{R}^{r \times d_{\text{in}}}$ and $B \in \mathbb{R}^{d_{\text{out}} \times r}$ implement a low-rank update to the corresponding projection.

\noindent \textbf{Notation note:} We use $\mathcal{L}_k$ for  the loss at training step $k$, and $L$ for block length in the diffusion process.

For the LoRA-B matrix $B \in \mathbb{R}^{d_{out} \times r}$ (where $d_{out}$ is the output dimension and $r$ is the rank), the AdamW optimizer maintains first and second moment estimates at each training step $k$:
\begin{equation}
\label{eq:adamw_moments}
M_{k,B} = \beta_1 M_{k-1,B} + (1-\beta_1) G_{k,B}, \quad
V_{k,B} = \beta_2 V_{k-1,B} + (1-\beta_2) G_{k,B}^{\odot 2},
\end{equation}
where $G_{k,B} = \nabla_{B} \mathcal{L}_k$ is the gradient tensor, $\beta_1, \beta_2 \in [0,1)$ are decay rates, and $\odot$ denotes element-wise operations.
The element-wise update magnitude at step $k$ is:
\begin{equation}
\label{eq:adamw_update}
U_{k,B} = \frac{M_{k,B}}{\sqrt{V_{k,B}} + \epsilon},
\end{equation}
where $\epsilon$ ensures numerical stability and division is element-wise.
To create a stable representation of the parameter importance patterns, we define the \emph{AdamW evolution tensor} as the average over all $\mathcal{K}$ fine-tuning steps:
\begin{equation}
\label{eq:adamw_evolution_tensor}
\bar{U}_B = \frac{1}{\mathcal{K}} \sum_{k=1}^{\mathcal{K}} U_{k,B} \in \mathbb{R}^{d_{out} \times r}.
\end{equation}

This tensor captures which elements of the LoRA-B matrix consistently received strong, directional updates during training. To enable comparison with token activations $\mathbf{f}_s \in \mathbb{R}^{d_{out}}$, we reduce $\bar{U}_B$ to a feature-aligned vector $\mathbf{u} \in \mathbb{R}^{d_{out}}$ using row-wise energy:
\begin{equation}
\label{eq:reduction}
\mathbf{u}[p] = \left\|\bar{U}_B[p,:]\right\|_2 = \sqrt{\sum_{j=1}^{r} \bar{U}_B[p,j]^2}, \quad p = 1, \ldots, d_{out}.
\end{equation}

This reduction preserves the update magnitude each output dimension receives across low-rank components, forming a parameter-importance signature aligned with the feature space.

\subsection{Metadata-Guided Early Termination During Inference}
\label{subsec:early_termination}

At inference time, we leverage the preserved AdamW evolution to determine when the diffusion process can safely terminate. Our approach operates on block-level diffusion \cite{NieZYZOHZLWL25, ZhaoGZG25, ArriolaGCYQHSK25}, where the sequence is divided into blocks of length $L$, and tokens within each block are progressively unmasked across denoising steps.

\subsubsection{Assessing Reasoning Alignment}
\label{subsubsec:alignment}

Let $\mathcal{S}_t$ denote the set of visible (unmasked) tokens at denoising step $t$. For each visible token $s \in \mathcal{S}_t$, we extract its post-LoRA activation $\mathbf{f}_s^{(t)} \in \mathbb{R}^{d_{out}}$ from the chosen module (specifically, the LoRA-B output of the Query projection in the last Transformer block, based on our empirical findings in Figure \ref{fig:parameter_activation}).
We compute the cosine similarity between each token's activation and the AdamW evolution vector:
\begin{equation}
\label{eq:similarity}
\text{Sim}_s^{(t)} = \frac{\langle \mathbf{f}_s^{(t)}, \mathbf{u} \rangle}{\|\mathbf{f}_s^{(t)}\|_2 \|\mathbf{u}\|_2},
\end{equation}
where $\mathbf{u}$ is the feature-aligned vector from Equation \ref{eq:reduction}.
To convert these alignment scores into a probability distribution, we apply softmax with a fixed temperature $\tau_{\text{blk}}$ within each block:
\begin{equation}
\label{eq:distribution}
P^{(t)}(s) = \frac{\exp(\text{Sim}_s^{(t)}/\tau_{\text{blk}})}{\sum_{i \in \mathcal{S}_t} \exp(\text{Sim}_i^{(t)}/\tau_{\text{blk}})}, \quad s \in \mathcal{S}_t.
\end{equation}

Keeping $\tau_{\text{blk}}$ fixed within a block ensures that distribution changes reflect genuine alignment shifts rather than temperature-induced artifacts.

\subsubsection{Detecting Reasoning Stability via Matched Support}
\label{subsubsec:stability}

As tokens are progressively unmasked, the support of our distribution grows. To properly compare distributions across steps, we must account for this changing support. Let $\mathcal{I}_t = \mathcal{S}_{t-1} \cap \mathcal{S}_t$ be the intersection of visible tokens between consecutive steps.
We renormalize both distributions to this common support:
\begin{equation}
\label{eq:renorm}
\tilde{P}^{(t)}(s) = \frac{P^{(t)}(s)}{\sum_{i \in \mathcal{I}_t} P^{(t)}(i)}, \quad \tilde{P}^{(t-1)}(s) = \frac{P^{(t-1)}(s)}{\sum_{i \in \mathcal{I}_t} P^{(t-1)}(i)}, \quad s \in \mathcal{I}_t.
\end{equation}

The step-wise divergence is then computed as:
\begin{equation}
\label{eq:kl_divergence}
D_t = D_{\text{KL}}(\tilde{P}^{(t)} \parallel \tilde{P}^{(t-1)}) = \sum_{s \in \mathcal{I}_t} \tilde{P}^{(t)}(s) \log \frac{\tilde{P}^{(t)}(s)}{\tilde{P}^{(t-1)}(s)}.
\end{equation}

\subsubsection{Early Termination with Consecutive Stability}
\label{subsubsec:termination}
To ensure robust termination decisions, we require consecutive steps of stability rather than sporadic stable steps.
We maintain a run-length counter $c$ updated as $c \!\leftarrow\! c+1$ if $D_t < \delta$, and reset to $0$ otherwise.

Beyond this heuristic, our analysis shows that small matched-support KL divergence over $\Omega$ consecutive steps, with $\Omega$ determined by the inference progress at the point of early termination, bounds the multi-step total variation distance and yields a no-flip condition for predicted tokens together with stability bounds for Lipschitz observables. PAC-style bounds (Corollary~\ref{cor:pac}) provide a basis for principled hyperparameter selection and indicate that early termination matches the full denoising during inference with high probability. Formal statements and calibration rules are given in Appendix~\ref{app:guarantees}, along with two extensions:
a token-wise freezing scheme that halts denoising per token with provable instance-wise safety (Appendix~\ref{app:token}), and a subspace generalization that replaces the single reasoning vector with a low-dimensional reasoning subspace while preserving all theoretical guarantees (Appendix~\ref{app:subspace}).

The diffusion process for the current block terminates when $c \geq \Omega$, indicating that the model's reasoning alignment has remained stable for $\Omega$ consecutive steps.
Algorithm~\ref{alg:edit} and Figure~\ref{fig:EDIT_diagram} (Appendix~\ref{app:edit_algo}) present the full EDIT procedure and its workflow.
The training phase extracts metadata with zero additional computational cost (these values are already computed by the optimizer), while the inference phase uses this metadata to make principled termination decisions.

\section{Experimental Validation}
\label{sec:experiments}

\paragraph{Experimental Setup.}
We evaluate on five reasoning tasks: Countdown \cite{PZWYPS25}, Sudoku \cite{Sudoku}, MATH500 \cite{LightmanKBEBLLS24}, GSM8K \cite{CKBCJKPTHNHS21}, and GPQA \cite{Rein2023GPQA}. We use LLaDA-8B \cite{NieZYZOHZLWL25} as our baseline model, fine-tuned on the s1 dataset \cite{MYSLLHZLCH25} with LoRA applied to QKV projections. All experiments use Intel XPU hardware to ensure reproducibility.
During SFT, we preserve AdamW evolution metadata (Section~\ref{subsec:adamw_evolution}).
Persisting reduced vectors $\mathbf{u}$ requires $\sim$16 KB per module, or $\sim$1.5 MB for all QKV projections across 32 blocks ($<$0.02\% of an 8 GB model).
At inference, EDIT uses task-specific thresholds (Appendix~\ref{subsec:hyperparameters}) selected on held-out validation sets (20\% of training data), ensuring no test set leakage.

\paragraph{Results: Efficiency Gains with Preserved Accuracy.}
Table \ref{tab:accuracy} shows that EDIT improves accuracy on Countdown (up to 31.6\%) and Sudoku (up to 16.1\%), while remaining competitive on other tasks.
Early termination avoids late-step degradation because once predictions stabilize, further denoising can overwrite correct intermediate states. This effect is most pronounced in crisp tasks such as Countdown and Sudoku.
GSM8K at sequence length 512 drops from 81.2\% to 76.2\% because long reasoning chains often stabilize before reasoning is complete.
Although GSM8K and GPQA show task-specific variation, the overall average remains positive, validating that metadata-guided termination improves rather than compromises quality.
Table \ref{tab:efficiency} shows that EDIT reduces average denoising steps per block by 11.8\%–68.3\% compared to fixed 64/128/256-step baselines for sequence lengths 128/256/512.
Gains are most pronounced on short sequences where full diffusion is wasteful.
With PAC-style calibration (Appendix~\ref{subsec:app_calibration}), 72.3\% of early terminations satisfy Corollary~\ref{cor:pac}, indicating that EDIT remains within its theoretical safety bounds while delivering speedups.

\begin{table}[t]
\centering
\caption{
Accuracy on reasoning benchmarks. EDIT uses training-time metadata for adaptive early termination. Results are mean over 3 seeds, where \textbf{bold} denotes the best, \underline{underline} denotes the second-best. 0-shot means no in-context examples during evaluation (post-SFT). Experiments are run with sequence lengths 128/256/512.
}
\label{tab:accuracy}
\resizebox{0.9\textwidth}{!}{%
\begin{tabular}{@{}l|ccc|ccc|ccc|ccc|ccc@{}}
\toprule
\multirow{2}{*}{\diagbox[width=4.5cm]{Method}{Dataset (Seq Len)}} &
\multicolumn{3}{c|}{\textbf{Countdown (0-shot)}} &
\multicolumn{3}{c|}{\textbf{Sudoku (0-shot)}} &
\multicolumn{3}{c|}{\textbf{MATH500 (0-shot)}} &
\multicolumn{3}{c|}{\textbf{GSM8K (0-shot)}} &
\multicolumn{3}{c}{\textbf{GPQA (0-shot)}} \\
& 128 & 256 & 512 & 128 & 256 & 512 & 128 & 256 & 512 & 128 & 256 & 512 & 128 & 256 & 512 \\
\midrule
LLaDA (No SFT) & 
    \underline{19.9} & 19.5 & 16.4 & 
    10.4 & 6.4 & \underline{6.3} & 
    \textbf{27.6} & \underline{32.4} & 36.0 & 
    \underline{68.1} & 75.8 & \underline{79.5} & 
    21.9 & \textbf{27.9} & 25.7 \\
LLaDA (SFT) &
    19.5 & \underline{20.7} & \underline{20.3} & 
    \underline{11.4} & \underline{8.2} & 5.0 & 
    26.2 & 30.4 & \underline{35.4} & 
    \textbf{69.8} & \underline{77.0} & \textbf{81.2} & 
    \underline{23.0} & 20.5 & \textbf{26.3} \\
\textbf{EDIT (Ours)} &
    \textbf{28.9} & \textbf{31.6} & \textbf{27.7} & 
    \textbf{16.1} & \textbf{11.3} & \textbf{7.6} & 
    \underline{27.4} & \textbf{32.8} & \textbf{36.6} & 
    67.3 & \textbf{77.6} & 76.2 & 
    \textbf{25.5} & \underline{27.7} & \underline{26.1} \\
\bottomrule
\end{tabular}
}
\end{table}

\begin{table}[t]
\centering
\caption{
Diffusion steps with EDIT vs. baseline full diffusion.
Values are averaged across blocks, and baselines are fixed at 64/128/256 steps for sequence lengths 128/256/512.
Percentages show reduction from baseline steps, with training metadata enabling confident early termination without quality loss.
}
\label{tab:efficiency}
\resizebox{0.9\textwidth}{!}{%
\begin{tabular}{@{}l|ccc|ccc|ccc|ccc|ccc@{}}
\toprule
\multirow{2}{*}{\diagbox[width=4.5cm]{Method}{Dataset (Seq Len)}} &
\multicolumn{3}{c|}{\textbf{Countdown}} &
\multicolumn{3}{c|}{\textbf{Sudoku}} &
\multicolumn{3}{c|}{\textbf{MATH500}} &
\multicolumn{3}{c|}{\textbf{GSM8K}} &
\multicolumn{3}{c}{\textbf{GPQA}} \\
& 128 & 256 & 512 & 128 & 256 & 512 & 128 & 256 & 512 & 128 & 256 & 512 & 128 & 256 & 512 \\
\midrule
Baseline (Full Steps) & 64 & 128 & 256 & 64 & 128 & 256 & 64 & 128 & 256 & 64 & 128 & 256 & 64 & 128 & 256 \\
\textbf{EDIT (Ours)} & 
    40.4 & 40.6 & 133.3 & 
    38.3 & 74.9 & 163.3 & 
    38.1 & 81.9 & 197.2 & 
    42.8 & 103.5 & 225.8 & 
    40.3 & 81.3 & 194.1 \\
\textbf{Reduction (\%)} & 
    36.9 & 68.3 & 47.9 & 
    40.2 & 41.5 & 36.2 & 
    40.5 & 36.0 & 23.0 & 
    33.1 & 19.2 & 11.8 & 
    37.0 & 36.5 & 24.2 \\
\bottomrule
\end{tabular}
}
\end{table}

\paragraph{Gradient-Based Justification for Early Termination.}
\label{sec:gradview_edit}

To determine when denoising steps can be truncated safely, we adopt a gradient view of inference by comparing pseudo-gradients at inference with SFT gradients on LoRA-B layers. At each inference step $t$, the model outputs logits $z_t(s)$ for tokens $s$ in a block of length $L$. Since no ground-truth labels are available, the signal comes from changes in predictive distributions across steps. Let $p_\theta(z_t(s))$ and $p_\theta(z_{t+1}(s))$ denote predictions at steps $t$ and $t{+}1$; their KL divergence quantifies prediction change (see Appendix \ref{app:gradview_edit} for details).
We define the pseudo-gradient as $\tilde{G}_{t,B} = \nabla_{B} \sum_{s \in S_{t+1}}
\mathrm{KL}\!\left(p_\theta(z_t(s)) \,\|\, p_\theta(z_{t+1}(s))\right)$, where $S_{t+1}$ is the visible token set at step $t{+}1$.
Backpropagating this divergence through LoRA-B yields $\tilde{G}_{t,B}$, whose root mean square (RMS) magnitude provides a scalar summary per step. Tracing these values across denoising steps produces a trajectory of inference dynamics.


\begin{figure}[t]
\centering
\includegraphics[width=0.4\linewidth]{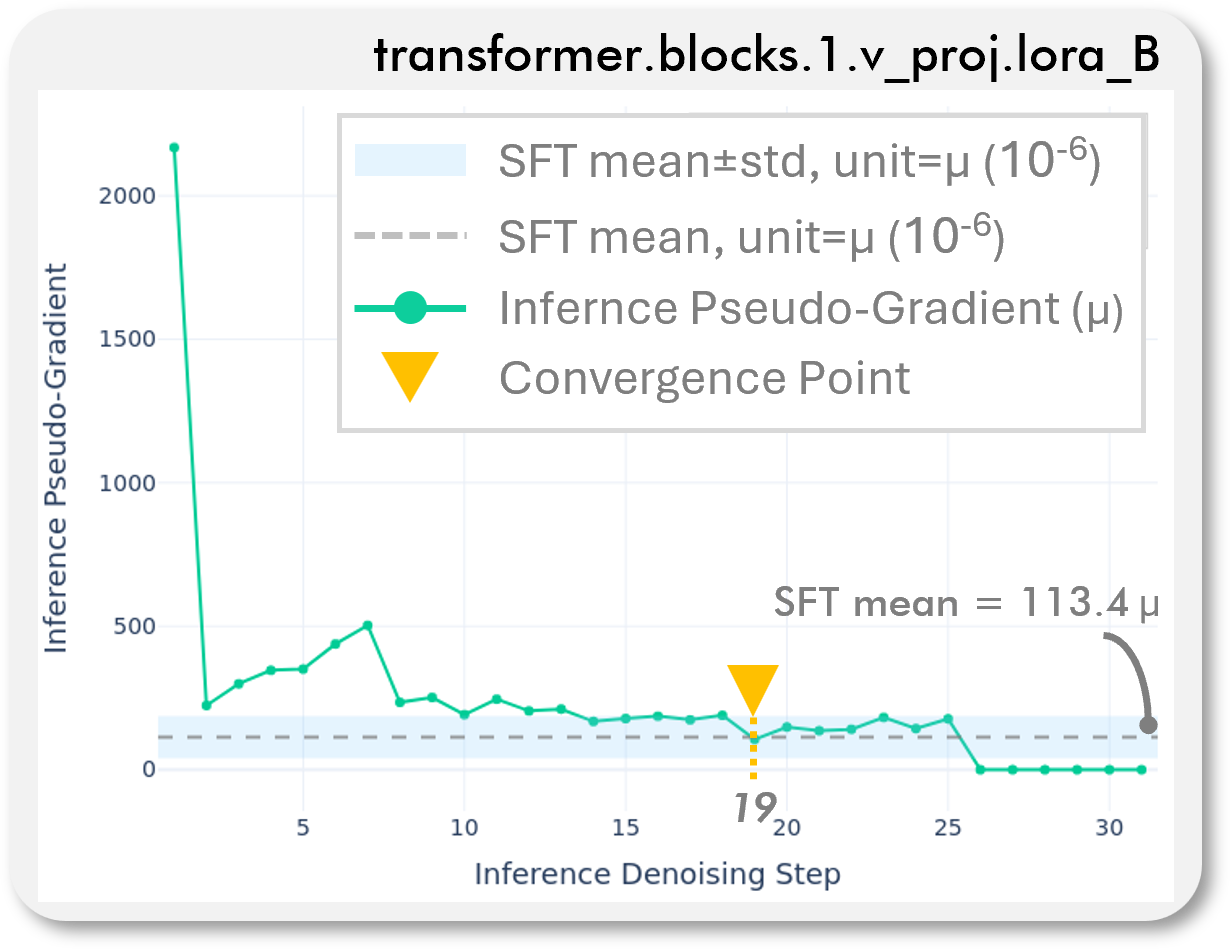}
\caption{
Gradient-based analysis of training–inference alignment on GPQA (seq. 128, 2nd block). Root mean square (RMS) pseudo-gradients $\tilde{G}_{t,B}$ across steps are compared with the SFT gradient mean (dashed) and variance band (shaded). The convergence point (yellow \textcolor{DarkYellow}{$\blacktriangledown$}) occurs at step 19, after which pseudo-gradients stabilize near the SFT mean, indicating that $\sim$20 steps per block preserve fidelity while reducing computation (Table~\ref{tab:efficiency}, 40.3 steps for two blocks).
}
\label{fig:grad_analysis}
\end{figure}

During SFT, we compute RMS magnitudes of gradients $G_{k,B}$ across training steps and summarize them by a mean $\mu_{\text{SFT}}$ and a variance band, defining the stable regime. Convergence is declared when inference pseudo-gradients (1) approach $\mu_{\text{SFT}}$ and (2) remain within this band, beyond this time further denoising adds cost without benefit.
On GPQA(seq. 128, 2nd block), Figure \ref{fig:grad_analysis} shows pseudo-gradients converging at step 19 (yellow $\color{DarkYellow}{\blacktriangledown}$). Thereafter they fluctuate around the SFT mean, indicating entry into the training-consistent regime. Terminating at $\sim$20 steps per block thus preserves fidelity while cutting cost, consistent with Table~\ref{tab:efficiency} (40.3 steps for two blocks) and Table~\ref{tab:accuracy}, which confirms accuracy remains competitive.

\paragraph{Storage and Computational Overhead.}
AdamW evolution metadata stores only the reduced vector $\mathbf{u} \in \mathbb{R}^{d_{out}}$ per LoRA module, not the full tensor $\bar{U}_B$. With $d_{out}=4096$ (float32), this is $\sim$16 KB per module, or $32 \times 3 \times 16$ KB $\approx$1.5 MB across all QKV projections in 32 blocks—just 0.02\% of an 8 GB model. At inference, EDIT adds cosine similarity (Equation~\ref{eq:similarity}) and KL divergence (Equation~\ref{eq:kl_divergence}) with cost $O(|\mathcal{S}_t| \cdot d_{out})$ and $O(|\mathcal{I}_t|)$, negligible compared to $O(L^2 \cdot d_{out})$ self-attention (Appendix~\ref{subsec:app_overhead}). Overall overhead is negligible, yielding net speedups.

\section{Conclusion and Future Directions}

We introduced EDIT, which preserves training metadata typically discarded and uses it to guide early inference termination in diffusion language models. By capturing optimization dynamics during fine-tuning, EDIT detects when reasoning is complete, reducing inference cost without architectural changes. Across five reasoning benchmarks, it achieves 11.8–68.3\% fewer diffusion steps while maintaining or improving accuracy, with only 0.02\% storage overhead.
EDIT has limitations: it requires training dynamics, often unavailable in released models, suggesting providers include optimization metadata; it depends on task-specific thresholds $(\delta,\Omega)$, motivating adaptive or learned criteria.
We evaluate only LoRA; full-parameter extensions remain future work.
Beyond early termination, training metadata could enable dynamic layer-wise compute, token-level processing, early quality prediction, and identifying prompts that merit additional inference budget.
This work also highlights a systemic inefficiency: training information is often discarded despite its value at inference.
Preserving and exploiting training metadata enables more holistic and efficient pipelines.

\bibliography{sample}

@inproceedings{LoshchilovH19,
  author       = {Ilya Loshchilov and
                  Frank Hutter},
  title        = {Decoupled Weight Decay Regularization},
  booktitle    = {7th International Conference on Learning Representations (ICLR)},
  year         = {2019}
}

@inproceedings{HuSWALWWC22,
  author       = {Edward J. Hu and
                  Yelong Shen and
                  Phillip Wallis and
                  Zeyuan Allen{-}Zhu and
                  Yuanzhi Li and
                  Shean Wang and
                  Lu Wang and
                  Weizhu Chen},
  title        = {LoRA: Low-Rank Adaptation of Large Language Models},
  booktitle    = {The Tenth International Conference on Learning Representations (ICLR)},
  year         = {2022}
}

@article{MYSLLHZLCH25,
  author       = {Niklas Muennighoff and
                  Zitong Yang and
                  Weijia Shi and
                  Xiang Lisa Li and
                  Li Fei{-}Fei and
                  Hannaneh Hajishirzi and
                  Luke Zettlemoyer and
                  Percy Liang and
                  Emmanuel J. Cand{\`{e}}s and
                  Tatsunori Hashimoto},
  title        = {s1: Simple test-time scaling},
  journal      = {arXiv preprint arXiv:2501.19393},
  year         = {2025}
}

@inproceedings{ArriolaGCYQHSK25,
  author       = {Marianne Arriola and
                  Aaron Gokaslan and
                  Justin T. Chiu and
                  Zhihan Yang and
                  Zhixuan Qi and
                  Jiaqi Han and
                  Subham Sekhar Sahoo and
                  Volodymyr Kuleshov},
  title        = {Block Diffusion: Interpolating Between Autoregressive and Diffusion
                  Language Models},
  booktitle    = {The Thirteenth International Conference on Learning Representations,
                  (ICLR)},
  year         = {2025}
}

@misc{Sudoku,
author       = {Arel},
title        = {Arel’s sudoku generator},
howpublished = {\url{https://www.ocf.berkeley.edu/~arel/sudoku/main.html}},
note         = {Accessed: 2025-04-08},
year         = {2025}
}

@misc{PZWYPS25,
author       = {Jiayi Pan and 
                Junjie Zhang and 
                Xingyao Wang and 
                Lifan Yuan and 
                Hao Peng and 
                Alane Suhr},
title        = {TinyZero},
howpublished = {https://github.com/Jiayi-Pan/TinyZero},
note         = {Accessed: 2025-01-24},
year         = {2025}
}

@inproceedings{LightmanKBEBLLS24,
  author       = {Hunter Lightman and
                  Vineet Kosaraju and
                  Yuri Burda and
                  Harrison Edwards and
                  Bowen Baker and
                  Teddy Lee and
                  Jan Leike and
                  John Schulman and
                  Ilya Sutskever and
                  Karl Cobbe},
  title        = {Let's Verify Step by Step},
  booktitle    = {The Twelfth International Conference on Learning Representations
                  (ICLR)},
  year         = {2024}
}

@article{CKBCJKPTHNHS21,
  author       = {Karl Cobbe and
                  Vineet Kosaraju and
                  Mohammad Bavarian and
                  Mark Chen and
                  Heewoo Jun and
                  Lukasz Kaiser and
                  Matthias Plappert and
                  Jerry Tworek and
                  Jacob Hilton and
                  Reiichiro Nakano and
                  Christopher Hesse and
                  John Schulman},
  title        = {Training Verifiers to Solve Math Word Problems},
  journal      = {arXiv preprint arXiv:2110.14168},
  year         = {2021}
}

@article{ZhaoGZG25,
  author       = {Siyan Zhao and
                  Devaansh Gupta and
                  Qinqing Zheng and
                  Aditya Grover},
  title        = {d1: Scaling Reasoning in Diffusion Large Language Models via Reinforcement
                  Learning},
  journal      = {arXiv preprint arXiv:2504.12216},
  year         = {2025}
}

@article{NieZYZOHZLWL25,
  author       = {Shen Nie and
                  Fengqi Zhu and
                  Zebin You and
                  Xiaolu Zhang and
                  Jingyang Ou and
                  Jun Hu and
                  Jun Zhou and
                  Yankai Lin and
                  Ji{-}Rong Wen and
                  Chongxuan Li},
  title        = {Large Language Diffusion Models},
  journal      = {arXiv preprint arXiv:2502.09992},
  year         = {2025}
}

@article{Rein2023GPQA,
  author       = {David Rein and
                  Betty Li Hou and
                  Asa Cooper Stickland and
                  Jackson Petty and
                  Richard Yuanzhe Pang and
                  Julien Dirani and
                  Julian Michael and
                  Samuel R. Bowman},
  title        = {{GPQA}: A Graduate-Level Google-Proof Q\&A Benchmark},
  journal      = {arXiv preprint arXiv:2311.12022},
  year         = {2023}
}

\newpage

\appendix
\section{Mathematical Details}
\label{app:mathematical}

\subsection{Complete Derivation of AdamW Evolution}
\label{subsec:app_derivation}

We provide the complete mathematical framework for extracting and utilizing AdamW evolution patterns. The key insight is that optimization dynamics during training create signatures of parameter importance that can guide inference.

For a LoRA-B matrix $B \in \mathbb{R}^{d \times r}$, the AdamW optimizer maintains exponentially weighted moving averages of gradients and squared gradients. Starting from the recursive definitions:
\begin{align}
M_{k,B}[i,j] &= \beta_1 M_{k-1,B}[i,j] + (1-\beta_1) G_{k,B}[i,j] \\
V_{k,B}[i,j] &= \beta_2 V_{k-1,B}[i,j] + (1-\beta_2) G_{k,B}[i,j]^2
\end{align}

By unrolling these recursions and assuming zero initialization, we obtain:
\begin{align}
M_{k,B}[i,j] &= (1-\beta_1) \sum_{\ell=1}^{k} \beta_1^{k-\ell} G_{\ell,B}[i,j] \\
V_{k,B}[i,j] &= (1-\beta_2) \sum_{\ell=1}^{k} \beta_2^{k-\ell} G_{\ell,B}[i,j]^2
\end{align}

The element-wise update magnitude captures both direction (through $M$) and reliability (through $V$):
\begin{equation}
U_{k,B}[i,j] = \frac{M_{k,B}[i,j]}{\sqrt{V_{k,B}[i,j]} + \epsilon}
\end{equation}

Averaging across all training steps yields the AdamW evolution tensor:
\begin{equation}
\bar{U}_B[i,j] = \frac{1}{\mathcal{K}} \sum_{k=1}^{\mathcal{K}} U_{k,B}[i,j]
\end{equation}

The reduction to feature space via row-wise energy (Equation \ref{eq:reduction}) preserves the total update magnitude each output dimension received, creating an interpretable signature of parameter importance.

\subsection{Theoretical Justification for Stability Detection}
\label{subsec:app_theory}

The KL divergence between consecutive alignment distributions provides a principled measure of reasoning stability. Under mild assumptions about the smoothness of the denoising process, we can show that stable KL divergence indicates convergence to a fixed point in the alignment space.

Consider the alignment distribution as a function of the denoising step: $P^{(t)} = f_t(\mathbf{x}, \mathbf{u})$ where $\mathbf{x}$ represents the current token states and $\mathbf{u}$ is the fixed AdamW evolution vector. If the denoising process is contractive in the alignment space (which can occur under conditions such as Lipschitz continuity of the denoiser combined with fixed-temperature softmax normalization), then:
\begin{equation}
D_{\text{KL}}(P^{(t+1)} \parallel P^{(t)}) \leq \gamma \cdot D_{\text{KL}}(P^{(t)} \parallel P^{(t-1)})
\end{equation}
for some $\gamma < 1$. This ensures that requiring $D_t < \delta$ for $\Omega$ consecutive steps provides strong evidence of convergence.

\section{Extended Experimental Details}
\label{app:experimental}

\subsection{Hyperparameter Selection Protocol}
\label{subsec:hyperparameters}

To ensure reproducibility and avoid overfitting, we employ a systematic hyperparameter selection protocol. For each task, we use 20\% of the training data as a validation set to tune the stability threshold $\delta \in \{0.025, 0.05, 0.1, 0.25, 0.45, 0.55\}$ and stability span $\Omega \in \{6, 8, 10, 12\}$. We select the configuration that maximizes the accuracy-efficiency trade-off, defined as accuracy divided by average diffusion steps. The block temperature $\tau_{\text{blk}}$ is fixed at 1.0 for all experiments to ensure fair comparison. Table \ref{tab:hyperparameters} provides the complete configuration for each benchmark.

\subsection{Complete Hyperparameter Configuration}
\label{subsec:app_hyperparameters}

Table \ref{tab:hyperparameters} provides the complete hyperparameter settings used in our experiments. These were selected using the validation protocol described in Section \ref{subsec:hyperparameters}.

\begin{table}[t]
\centering
\caption{EDIT hyperparameter configuration for each benchmark and sequence length. Parameters were selected on validation sets to optimize the accuracy-efficiency trade-off.}
\label{tab:hyperparameters}
\resizebox{\textwidth}{!}{%
\begin{tabular}{@{}l|ccc|ccc|ccc|ccc|ccc@{}}
\toprule
\multirow{2}{*}{\diagbox[width=4cm]{Parameter}{Dataset (Len)}} &
\multicolumn{3}{c|}{\textbf{Countdown}} &
\multicolumn{3}{c|}{\textbf{Sudoku}} &
\multicolumn{3}{c|}{\textbf{MATH500}} &
\multicolumn{3}{c|}{\textbf{GSM8K}} &
\multicolumn{3}{c}{\textbf{GPQA}} \\
& 128 & 256 & 512 & 128 & 256 & 512 & 128 & 256 & 512 & 128 & 256 & 512 & 128 & 256 & 512 \\
\midrule
\multicolumn{16}{l}{\textbf{All Blocks}} \\
Block Temperature ($\tau_{\text{blk}}$) & 1.0 & 1.0 & 1.0 & 1.0 & 1.0 & 1.0 & 1.0 & 1.0 & 1.0 & 1.0 & 1.0 & 1.0 & 1.0 & 1.0 & 1.0 \\
\midrule
\multicolumn{16}{l}{\textbf{First Block}} \\
Threshold ($\delta$) & 0.05 & 0.05 & 0.05 & 0.05 & 0.05 & 0.05 & 0.05 & 0.05 & 0.05 & 0.05 & 0.05 & 0.05 & 0.05 & 0.05 & 0.05 \\
Stability Span ($\Omega$) & 6 & 6 & 6 & 6 & 6 & 6 & 6 & 6 & 6 & 6 & 6 & 6 & 6 & 6 & 6 \\
\midrule
\multicolumn{16}{l}{\textbf{Subsequent Blocks}} \\
Threshold ($\delta$) & 0.05 & 0.55 & 0.45 & 0.1 & 0.45 & 0.05 & 0.1 & 0.05 & 0.025 & 0.05 & 0.025 & 0.025 & 0.05 & 0.05 & 0.05 \\
Stability Span ($\Omega$) & 6 & 12 & 12 & 6 & 12 & 6 & 6 & 6 & 6 & 6 & 6 & 6 & 6 & 6 & 8 \\
\bottomrule
\end{tabular}
}
\end{table}

\subsection{Understanding When Training Metadata Helps}
\label{subsec:analysis}

Figure \ref{fig:subdomain_analysis} reveals that EDIT's effectiveness varies across problem types within GPQA. The method shows substantial improvements in domains requiring systematic reasoning (Molecular Biology, Astrophysics) while providing modest gains in others. This variation supports our core thesis: the training metadata captures task-specific patterns, and its utility depends on how well-defined these patterns are for each domain. Tasks with clear, consistent reasoning pathways benefit most from our approach.

\begin{figure}[t]
\centering
\includegraphics[width=0.99\textwidth]{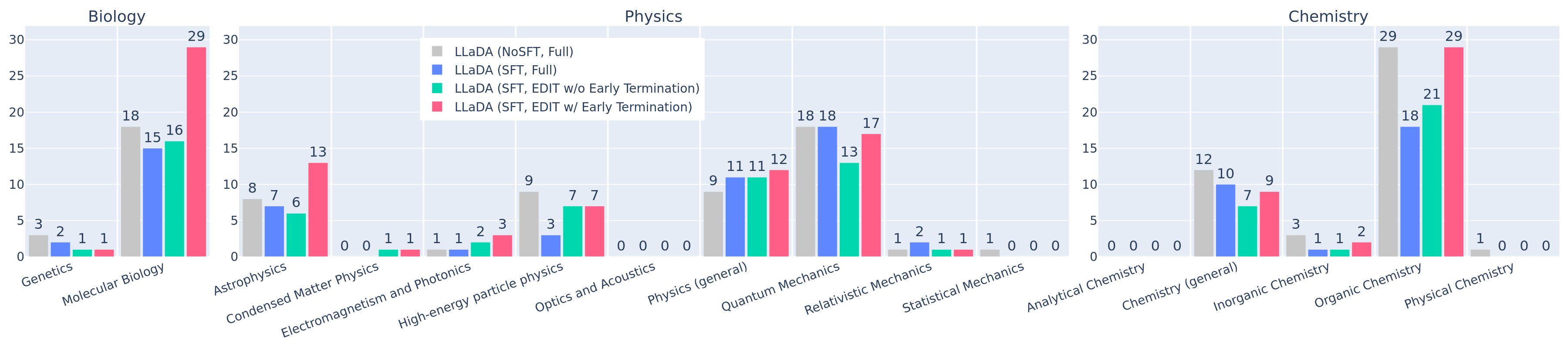}
\caption{Performance breakdown across GPQA subdomains comparing EDIT (red) with baseline SFT (green). EDIT shows particularly strong improvements in Molecular Biology and Astrophysics, where reasoning patterns are more structured. The domain-specific variation validates that training metadata captures specialized reasoning pathways.}
\label{fig:subdomain_analysis}
\end{figure}

Figure \ref{fig:parameter_activation} visualizes how different reasoning tasks activate distinct parameter subsets, as revealed by the AdamW evolution patterns. We focus on the LoRA-B matrix of the Query projection in the last Transformer block (block 31), which empirically shows the strongest task-specific patterns. This visualization confirms that training dynamics create meaningful signatures that can guide inference decisions.

\begin{figure}[t]
\centering
\includegraphics[width=0.99\textwidth]{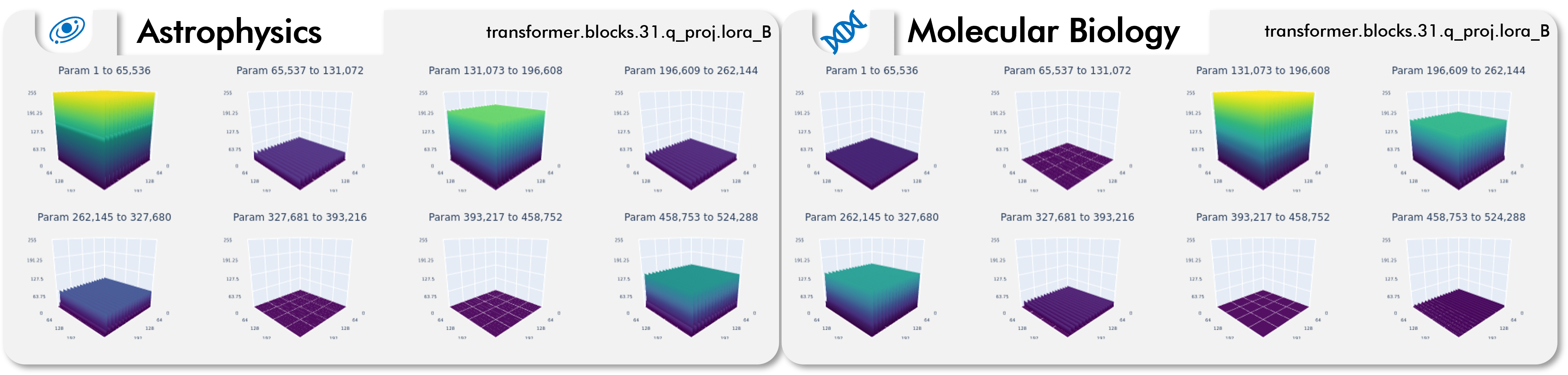}
\caption{Task-specific parameter activation patterns revealed by AdamW evolution. Different GPQA subdomains (Astrophysics vs. Molecular Biology) engage distinct parameter subsets in the LoRA-B matrix of the Query projection (transformer.block.31). The 3D visualization shows how parameter importance varies across tasks, demonstrating that training metadata captures specialized reasoning pathways.}
\label{fig:parameter_activation}
\end{figure}

\subsection{Storage and Computational Overhead}
\label{subsec:app_overhead}
The AdamW evolution metadata requires storing only the reduced vector $\mathbf{u} \in \mathbb{R}^d$ per chosen LoRA module, not the full tensor $\bar{U}_B$. For our configuration with $d = 4096$, this amounts to approximately 16 KB per module (assuming float32 precision). Even if we store metadata for all QKV projections across all 32 Transformer blocks, the total overhead is $32 \times 3 \times 16$ KB $\approx$ 1.5 MB—merely 0.02\% of the 8 GB model size.
At inference time, EDIT adds cosine similarity computations (Equation \ref{eq:similarity}) and KL divergence calculations (Equation \ref{eq:kl_divergence}) at each denoising step. These operations have complexity $O(|\mathcal{S}_t| \cdot d)$ and $O(|\mathcal{I}_t|)$ respectively, which is minimal compared to the $O(L^2 \cdot d)$ cost of self-attention in each Transformer block. The net result is substantial speedup despite these additional computations.

\subsection{Additional Visualizations}
\label{subsec:app_visualizations}

Figure \ref{fig:loraB_updates} provides additional evidence for the importance of preserving training-time metadata for LoRA-B. The visualization shows how specific parameters receive consistently strong updates during fine-tuning, creating clear signatures that can guide inference.
In contrast, Figure \ref{fig:loraA_updates} demonstrates that LoRA-A exhibits minimal parameter changes during the same training process.

\begin{figure}[!ht]
\centering
\includegraphics[width=0.99\textwidth]{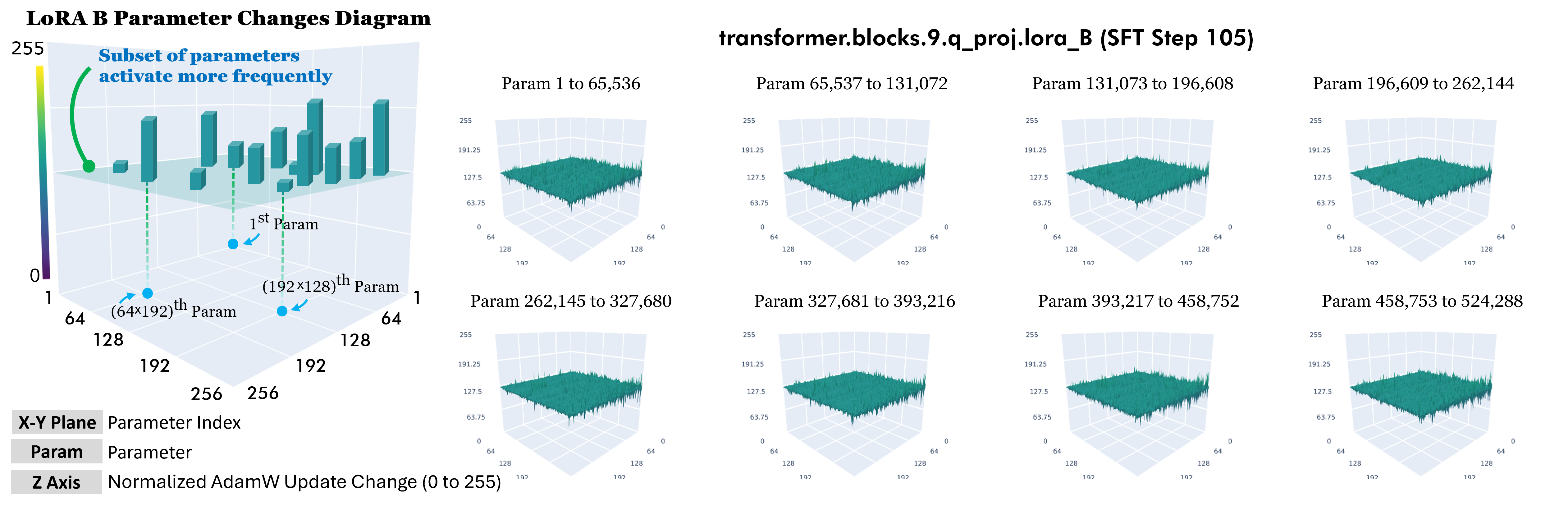}
\caption{Visualization of LoRA-B parameter updates at training step 105. A $4096\times128$ LoRA projection produces $524,288$ parameters, reshaped into a $256\times256\times8$ grid with the Z-axis showing normalized AdamW update magnitudes (scaled 0–255). Pronounced peaks indicate parameters critical for reasoning tasks, demonstrating that optimization dynamics create clear importance signatures.}
\label{fig:loraB_updates}
\end{figure}

\begin{figure}[!ht]
\centering
\includegraphics[width=0.99\textwidth]{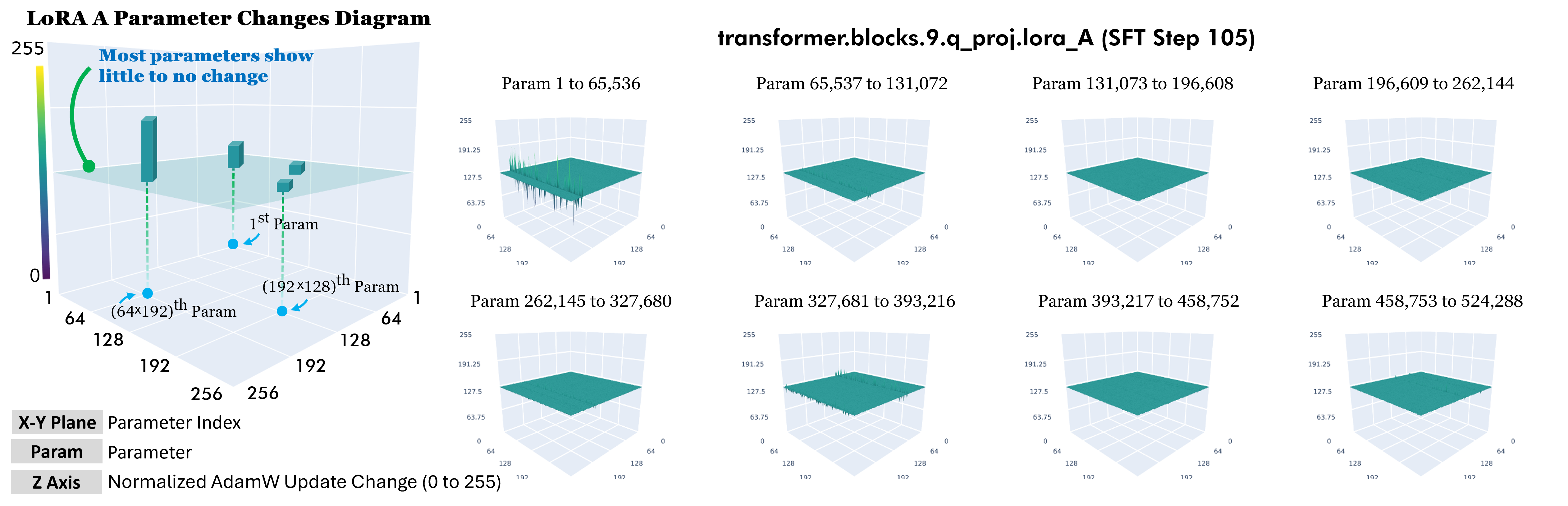}
\caption{Visualization of LoRA-A parameter updates at training step 105. A $128\times4096$ LoRA projection produces $524,288$ parameters, reshaped into a $256\times256\times8$ grid with the Z-axis showing normalized AdamW update magnitudes (scaled 0–255). Pronounced peaks indicate parameters critical for reasoning tasks, demonstrating that optimization dynamics create clear importance signatures.}
\label{fig:loraA_updates}
\end{figure}

\subsection{Module Selection Analysis}
\label{subsec:app_ablation}

To justify our choice of using the LoRA-B matrix from the Query projection, we conducted an ablation study comparing different modules and reduction strategies. Table \ref{tab:module_ablation} shows that the Query projection's LoRA-B matrix with row-wise energy reduction provides the most reliable stability signal.

\begin{table}[!ht]
\centering
\caption{Ablation study on module selection and reduction strategy. Results show average KL divergence stability (lower is better) across 100 validation examples from GSM8K.}
\label{tab:module_ablation}
\begin{tabular}{lcc}
\toprule
\textbf{Module} & \textbf{Row-wise Energy} & \textbf{Row-wise Mean} \\
\midrule
Query (LoRA-A) & 0.142 & 0.168 \\
Query (LoRA-B) & \textbf{0.089} & 0.103 \\
Key (LoRA-B) & 0.124 & 0.139 \\
Value (LoRA-B) & 0.117 & 0.128 \\
\bottomrule
\end{tabular}
\end{table}

\section{Theoretical Foundations of Early Termination in EDIT}
\label{app:guarantees}

\subsection{Setup and Notation}
\label{subsec:app_setup}

At denoising step $t$, let $\mathcal{S}_t$ denote the visible tokens, and $\mathcal{I}_t = \mathcal{S}_{t-1} \cap \mathcal{S}_t$ the matched support between steps $t-1$ and $t$. Let $\tilde{P}^{(t)}$ be the probability distribution on $\mathcal{I}_t$ obtained by restricting $P^{(t)}$ to $\mathcal{I}_t$ and renormalizing. Define the per-step divergence
\begin{equation}
D_t = D_{\text{KL}}(\tilde{P}^{(t)} \parallel \tilde{P}^{(t-1)}).
\end{equation}

EDIT declares stability when $D_{t-\Omega+1}, \ldots, D_t \leq \delta$ for some integers $\Omega \geq 1$ and threshold $\delta > 0$ (with fixed in-block temperature when forming $P^{(\cdot)}$).

We use the total variation distance $\text{TV}(p,q) = \frac{1}{2}\sum_{i}|p_i - q_i|$ and Pinsker's inequality $\text{TV}(p,q) \leq \sqrt{\frac{1}{2}D_{\text{KL}}(p \parallel q)}$.

\subsection{Multi-Step Control from Run-Length KL}
\label{subsec:app_multistep}

\begin{lemma}[Run-length KL implies multi-step TV bound]
\label{lem:runlength}
If $D_{t-\Omega+1}, \ldots, D_t \leq \delta$, then
\begin{equation}
\text{TV}(\tilde{P}^{(t)}, \tilde{P}^{(t-\Omega)}) \leq \sum_{r=t-\Omega+1}^{t} \text{TV}(\tilde{P}^{(r)}, \tilde{P}^{(r-1)}) \leq \Omega\sqrt{\frac{\delta}{2}}.
\end{equation}
\end{lemma}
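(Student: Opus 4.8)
The plan is to prove the statement by combining two elementary facts: the triangle inequality for total variation distance, and Pinsker's inequality applied term-by-term. The lemma has the structure of a chained estimate, so the natural approach is to telescope through the intermediate distributions $\tilde{P}^{(t-1)}, \tilde{P}^{(t-2)}, \ldots, \tilde{P}^{(t-\Omega+1)}$ and then bound each single-step hop.

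First I would write $\tilde{P}^{(t)} - \tilde{P}^{(t-\Omega)}$ as the telescoping sum $\sum_{r=t-\Omega+1}^{t}\bigl(\tilde{P}^{(r)} - \tilde{P}^{(r-1)}\bigr)$ and invoke the triangle inequality for the total variation metric to get
\begin{equation}
\text{TV}(\tilde{P}^{(t)}, \tilde{P}^{(t-\Omega)}) \leq \sum_{r=t-\Omega+1}^{t} \text{TV}(\tilde{P}^{(r)}, \tilde{P}^{(r-1)}),
\end{equation}
which is exactly the first inequality in the statement. This step only uses that $\text{TV}$ is a genuine metric (it is a norm-induced distance, $\text{TV}(p,q) = \tfrac12\|p-q\|_1$), so the triangle inequality is immediate. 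One subtlety to flag: the distributions $\tilde{P}^{(r)}$ and $\tilde{P}^{(r-1)}$ are each defined on the matched support $\mathcal{I}_r = \mathcal{S}_{r-1}\cap \mathcal{S}_r$, so strictly speaking the telescoped differences live on possibly different index sets. I would address this by viewing every $\tilde{P}^{(r)}$ as a measure on the ambient token index set (zero outside its support), or equivalently by noting that the run-length stability hypothesis is only invoked on consecutive pairs where the relevant renormalized distributions are already defined; I would state this normalization convention explicitly at the start of the proof so the telescoping sum is literally an identity of vectors.

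Second, for each $r$ in the range I would apply Pinsker's inequality, $\text{TV}(\tilde{P}^{(r)}, \tilde{P}^{(r-1)}) \leq \sqrt{\tfrac12 D_r}$, using the definition $D_r = D_{\text{KL}}(\tilde{P}^{(r)}\parallel \tilde{P}^{(r-1)})$ from the setup. The hypothesis $D_{t-\Omega+1},\ldots,D_t \leq \delta$ then gives $\text{TV}(\tilde{P}^{(r)}, \tilde{P}^{(r-1)}) \leq \sqrt{\delta/2}$ for every term, and summing the $\Omega$ terms yields the bound $\Omega\sqrt{\delta/2}$, completing the proof.

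I do not expect a genuine obstacle here — this is a short chaining argument. The only thing requiring care, as noted, is the bookkeeping around the time-varying support $\mathcal{I}_t$: one must be sure that the telescoping identity is meaningful when consecutive matched supports differ, which is handled by the ambient-embedding convention. A secondary point worth a remark is the direction of the KL argument in Pinsker (KL is asymmetric, but Pinsker's inequality holds for either ordering since $\text{TV}$ is symmetric), so using $D_r = D_{\text{KL}}(\tilde{P}^{(r)}\parallel\tilde{P}^{(r-1)})$ rather than the reverse is fine.
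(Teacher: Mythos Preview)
Your proposal is correct and follows exactly the paper's own argument: triangle inequality for total variation, then Pinsker's inequality applied to each summand. Your explicit handling of the time-varying matched supports $\mathcal{I}_r$ via an ambient embedding is additional care that the paper's proof omits but which does not change the approach.
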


\begin{proof}
The result follows from the triangle inequality for total variation distance, then applying Pinsker's inequality to each summand.
\end{proof}

\subsection{Local Argmax Invariance at the Stopping Time}
\label{subsec:app_argmax}

Let $i^*(t) = \arg\max_{s \in \mathcal{I}_t} \tilde{P}^{(t)}(s)$ and $m_t = \tilde{P}^{(t)}_{(1)} - \tilde{P}^{(t)}_{(2)}$ be the top-2 margin on $\mathcal{I}_t$.

\begin{theorem}[Local argmax invariance certificate]
\label{thm:local_argmax}
If $D_{t-\Omega+1}, \ldots, D_t \leq \delta$ and
\begin{equation}
\Omega\sqrt{\frac{\delta}{2}} < \frac{1}{2}m_t,
\end{equation}
then $i^*(t') = i^*(t)$ for all $t' \in \{t-\Omega, \ldots, t\}$.
\end{theorem}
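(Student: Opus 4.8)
The plan is to derive the certificate from Lemma~\ref{lem:runlength} by an elementary ``robust argmax'' argument: a top-$2$ margin $m_t$ at step $t$ is preserved under any perturbation of the distribution whose total-variation size is strictly below $m_t/2$, and the run-length hypothesis $D_{t-\Omega+1},\dots,D_t\le\delta$ controls exactly that perturbation over the whole window.

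First I would fix an index $t'\in\{t-\Omega,\dots,t-1\}$ (the case $t'=t$ is vacuous) and apply the telescoping estimate used to prove Lemma~\ref{lem:runlength} to the sub-run $D_{t'+1},\dots,D_t$, each term of which is $\le\delta$ since $t'+1\ge t-\Omega+1$. The triangle inequality for total variation together with Pinsker's inequality on each of the $t-t'\le\Omega$ increments yields
\[
\text{TV}\big(\tilde P^{(t)},\tilde P^{(t')}\big)\;\le\;\sum_{r=t'+1}^{t}\text{TV}\big(\tilde P^{(r)},\tilde P^{(r-1)}\big)\;\le\;(t-t')\sqrt{\delta/2}\;\le\;\Omega\sqrt{\delta/2}.
\]
I work here on the common matched support carried through the window so that these distributions are directly comparable; the support bookkeeping is the one subtle point, discussed below.

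Next I would transfer the margin to step $t'$. Write $i^*=i^*(t)$ and $\Delta_s=\tilde P^{(t')}(s)-\tilde P^{(t)}(s)$, so that $\sum_s|\Delta_s|=2\,\text{TV}(\tilde P^{(t)},\tilde P^{(t')})$. For any $j\ne i^*$ the definition of the top-$2$ margin gives $\tilde P^{(t)}(i^*)-\tilde P^{(t)}(j)\ge m_t$, hence
\[
\tilde P^{(t')}(i^*)-\tilde P^{(t')}(j)\;\ge\; m_t-\big(|\Delta_{i^*}|+|\Delta_j|\big)\;\ge\; m_t-2\,\text{TV}\big(\tilde P^{(t)},\tilde P^{(t')}\big)\;\ge\; m_t-2\Omega\sqrt{\delta/2}\;>\;0,
\]
where the final strict inequality is precisely the hypothesis $\Omega\sqrt{\delta/2}<\tfrac12 m_t$. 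Since $j$ was arbitrary, $i^*$ is the unique maximizer of $\tilde P^{(t')}$ on the support, i.e. $i^*(t')=i^*(t)$; letting $t'$ range over $\{t-\Omega,\dots,t\}$ completes the argument.

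The hard part is the support bookkeeping rather than the estimates. Since an unmasked token stays visible, $\mathcal{S}_{t'-1}\subseteq\mathcal{S}_{t-1}$ and therefore $\mathcal{I}_{t'}\subseteq\mathcal{I}_t$, but a token first revealed strictly between steps $t'-1$ and $t-1$ lies in $\mathcal{I}_t\setminus\mathcal{I}_{t'}$, so the $\tilde P^{(r)}$ over the window need not share a support a priori. I would resolve this in the natural regime of a genuine stopping time---where the visible set is stable across the window, so every $\mathcal{I}_r$ coincides and the argument above applies verbatim---and, in general, by restricting each $\tilde P^{(r)}$ to $\mathcal{J}=\bigcap_{r}\mathcal{I}_r$ and renormalizing: restriction-plus-renormalization is a TV-contraction, so the bound of the first display only tightens, and the conclusion reads as argmax-invariance over $\mathcal{J}$, matching $i^*(t')$ once $i^*(t)\in\mathcal{J}$ (again guaranteed by support stability over the run). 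Pinning down this last condition is the only place the proof requires care; the Pinsker and margin-transfer steps are routine.
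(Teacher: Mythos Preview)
Your approach is essentially the paper's: bound $\text{TV}(\tilde P^{(t)},\tilde P^{(t')})$ via Lemma~\ref{lem:runlength} (triangle inequality plus Pinsker over the sub-run) and then invoke the top-$2$ margin to preclude an argmax flip. The paper compresses this into a one-line contrapositive---``if $i^*$ changed then $\text{TV}(p,q)\ge\tfrac12 m_t$, contradicting Lemma~\ref{lem:runlength}''---and silently omits the matched-support bookkeeping you (rightly) flag as the only delicate point.
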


\begin{proof}
If $i^*$ changed between $p = \tilde{P}^{(t)}$ and $q = \tilde{P}^{(t')}$, then $\text{TV}(p,q) \geq \frac{1}{2}(p_{(1)} - p_{(2)}) = \frac{1}{2}m_t$ (considering the mass that must move between the top two coordinates when the argmax changes). This contradicts Lemma~\ref{lem:runlength}.
\end{proof}

\textbf{Interpretation:} When EDIT stops and the inequality holds, the predicted token on the matched support has been unchanged for the past $\Omega$ steps—providing a verifiable certificate attached to the stopping decision.

\subsection{Future-Step Robustness via Contraction}
\label{subsec:app_future}

After the last unmasking in a block, let $K_r$ denote the Markov operator that maps $\tilde{P}^{(r-1)}$ to $\tilde{P}^{(r)}$ on the fixed support. Define the Dobrushin coefficient $\alpha(K_r) = \sup_{p \neq q} \frac{\text{TV}(pK_r, qK_r)}{\text{TV}(p,q)} \in [0,1]$.

\begin{assumption}[Local contraction post-unmasking]
\label{asmp:contraction}
There exists $\alpha < 1$ such that $\alpha(K_r) \leq \alpha$ for all $r \geq t+1$ within the block. This property is standard for convergent Markov chains and can be verified empirically on validation data.
\end{assumption}

\begin{theorem}[Tail movement bound and global argmax preservation]
\label{thm:tail}
Under Assumption~\ref{asmp:contraction},
\\
if
$D_{t-\Omega+1}, \ldots, D_t \leq \delta$,
then for any $s \geq 1$,
\begin{equation}
\text{TV}(\tilde{P}^{(t+s)}, \tilde{P}^{(t)}) \leq \frac{\alpha^s}{1-\alpha} \text{TV}(\tilde{P}^{(t)}, \tilde{P}^{(t-1)}) \leq \frac{\alpha^s}{1-\alpha}\sqrt{\frac{\delta}{2}},
\end{equation}
and thus $\sup_{s \geq 1} \text{TV}(\tilde{P}^{(t+s)}, \tilde{P}^{(t)}) \leq \frac{1}{1-\alpha}\sqrt{\frac{\delta}{2}}$.

If additionally
\begin{equation}
\Omega\sqrt{\frac{\delta}{2}} + \frac{1}{1-\alpha}\sqrt{\frac{\delta}{2}} < \frac{1}{2}m_t,
\end{equation}
then $i^*(t+s) = i^*(t)$ for all $s \geq 0$ (argmax is preserved forever on the fixed support).
\end{theorem}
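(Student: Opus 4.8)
The plan is to propagate the run-length control of Lemma~\ref{lem:runlength} \emph{forward} past the stopping time, amplifying it with the one-step contraction of Assumption~\ref{asmp:contraction}, and then to read off the argmax claim from a cheap ``a flip costs half the margin'' estimate.

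\textbf{Tail movement bound.} First I would record that, on the fixed post-unmasking support, $\tilde{P}^{(r)} = \tilde{P}^{(r-1)} K_r$ with $\alpha(K_r) \le \alpha$, so the defining property of the Dobrushin coefficient, $\text{TV}(pK_r,qK_r) \le \alpha\,\text{TV}(p,q)$, gives by induction on $j$ the geometric decay of consecutive movements,
\[
\text{TV}(\tilde{P}^{(t+j)}, \tilde{P}^{(t+j-1)}) \le \alpha^{j}\,\text{TV}(\tilde{P}^{(t)}, \tilde{P}^{(t-1)}), \qquad j \ge 1 ,
\]
while Pinsker applied to $D_t \le \delta$ supplies $\text{TV}(\tilde{P}^{(t)}, \tilde{P}^{(t-1)}) \le \sqrt{\delta/2}$. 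Telescoping $\tilde{P}^{(t)} \to \cdots \to \tilde{P}^{(t+s)}$ through the triangle inequality for total variation and summing the resulting geometric series, $\sum_{j=1}^{s}\alpha^{j} \le \tfrac{1}{1-\alpha}$, yields the displayed tail bound $\text{TV}(\tilde{P}^{(t+s)}, \tilde{P}^{(t)}) \le \tfrac{1}{1-\alpha}\sqrt{\delta/2}$, and taking $\sup_{s\ge 1}$ gives the uniform statement.

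\textbf{Argmax preserved forever.} The lever is an elementary inequality: if the argmax of $\tilde{P}^{(t')}$ differs from $i^*(t)=:a$, with new argmax $b$, then $\tilde{P}^{(t)}(a)-\tilde{P}^{(t)}(b) \ge m_t$ whereas $\tilde{P}^{(t')}(a) \le \tilde{P}^{(t')}(b)$, so subtracting shows that the $a$- and $b$-coordinate changes alone force $\text{TV}(\tilde{P}^{(t)},\tilde{P}^{(t')}) \ge \tfrac12 m_t$. I would then bound $\text{TV}(\tilde{P}^{(t)},\tilde{P}^{(t')})$ for every $t' \ge t-\Omega$: by Lemma~\ref{lem:runlength} it is at most $\Omega\sqrt{\delta/2}$ when $t'\in\{t-\Omega,\dots,t\}$, and by the tail bound above it is at most $\tfrac{1}{1-\alpha}\sqrt{\delta/2}$ when $t'>t$. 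Both are strictly below $\tfrac12 m_t$ under the hypothesis $\Omega\sqrt{\delta/2}+\tfrac{1}{1-\alpha}\sqrt{\delta/2}<\tfrac12 m_t$, which exceeds both $\Omega\sqrt{\delta/2}$ and $\tfrac{1}{1-\alpha}\sqrt{\delta/2}$. Hence no flip can occur anywhere on $\{t-\Omega, t-\Omega+1, \dots\}$; in particular $i^*(t+s)=i^*(t)$ for all $s\ge 0$, and the backward half of the window recovers Theorem~\ref{thm:local_argmax} as the special case $t'\le t$.

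\textbf{Where the care goes.} The analysis is otherwise routine, so the work is mostly bookkeeping, and I would flag three points. The contraction and the telescoping both require all $\tilde{P}^{(r)}$ to live on one common support with no intermediate re-renormalization, which is exactly what ``after the last unmasking in a block'' secures, so I would state that reduction explicitly at the outset. The one subtle hypothesis point is that the contraction must cover the operator out of step $t$ itself---that is, the post-unmasking regime must have been entered by step $t-1$, so that $\text{TV}(\tilde{P}^{(t+1)},\tilde{P}^{(t)}) \le \alpha\sqrt{\delta/2}$---since the stopping rule does not control $D_{t+1}$; if instead the last unmasking coincides with step $t$, one loses a factor and the first movement term must be weakened accordingly. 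Finally, the flip-cost argument must always be run against $\tilde{P}^{(t)}$, whose top-2 margin $m_t$ is the quantity appearing in the hypothesis, rather than against an intermediate distribution whose gap could be smaller.
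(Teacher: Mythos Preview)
Your proposal is correct and follows essentially the same approach as the paper's proof: contract successive one-step TV distances by $\alpha$ via the Dobrushin coefficient, telescope and sum the geometric tail, then invoke the ``flip costs half the margin'' argument of Theorem~\ref{thm:local_argmax} for argmax preservation. Your explicit bookkeeping about the fixed post-unmasking support, the need for contraction to apply already at step $t$, and anchoring the margin to $\tilde{P}^{(t)}$ adds rigor beyond the paper's one-line sketch but does not depart from its method.
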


\begin{proof}
One-step TV contracts by at most $\alpha$; summing the geometric tail yields the bound. The argmax preservation follows by the same margin argument as in Theorem~\ref{thm:local_argmax}.
\end{proof}

\subsection{Stability of Lipschitz Observables}
\label{subsec:app_lipschitz}

\begin{theorem}[Stability of Lipschitz functionals]
\label{thm:lipschitz}
Let $F: \Delta \to \mathbb{R}$ satisfy $|F(p) - F(q)| \leq L \cdot \text{TV}(p,q)$ for all $p,q$. If $D_{t-\Omega+1}, \ldots, D_t \leq \delta$, then
\begin{equation}
|F(\tilde{P}^{(t)}) - F(\tilde{P}^{(t-\Omega)})| \leq L \cdot \Omega\sqrt{\frac{\delta}{2}}.
\end{equation}
Under Assumption~\ref{asmp:contraction},
\begin{equation}
\sup_{s \geq 1}|F(\tilde{P}^{(t+s)}) - F(\tilde{P}^{(t)})| \leq \frac{L}{1-\alpha}\sqrt{\frac{\delta}{2}}.
\end{equation}
\end{theorem}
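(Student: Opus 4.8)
The plan is to treat this as an immediate corollary of the total variation bounds already established, since the Lipschitz hypothesis converts any TV estimate into an estimate on $F$. First I would invoke the defining Lipschitz property $|F(p)-F(q)| \le L\cdot\text{TV}(p,q)$ with $p = \tilde{P}^{(t)}$ and $q = \tilde{P}^{(t-\Omega)}$, reducing the first claim to a bound on $\text{TV}(\tilde{P}^{(t)}, \tilde{P}^{(t-\Omega)})$. Then I would apply Lemma~\ref{lem:runlength}, whose hypothesis $D_{t-\Omega+1},\ldots,D_t \le \delta$ is exactly what we are given, to conclude $\text{TV}(\tilde{P}^{(t)},\tilde{P}^{(t-\Omega)}) \le \Omega\sqrt{\delta/2}$. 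Chaining the two inequalities yields $|F(\tilde{P}^{(t)}) - F(\tilde{P}^{(t-\Omega)})| \le L\cdot\Omega\sqrt{\delta/2}$, which is the first display.

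For the second display I would proceed identically but use the contraction-based tail estimate instead. Applying the Lipschitz property to $p = \tilde{P}^{(t+s)}$, $q = \tilde{P}^{(t)}$ gives $|F(\tilde{P}^{(t+s)}) - F(\tilde{P}^{(t)})| \le L\cdot\text{TV}(\tilde{P}^{(t+s)},\tilde{P}^{(t)})$. Under Assumption~\ref{asmp:contraction}, Theorem~\ref{thm:tail} supplies $\text{TV}(\tilde{P}^{(t+s)},\tilde{P}^{(t)}) \le \frac{\alpha^s}{1-\alpha}\sqrt{\delta/2} \le \frac{1}{1-\alpha}\sqrt{\delta/2}$ for every $s \ge 1$. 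Taking the supremum over $s \ge 1$ on the left and using this uniform bound on the right gives $\sup_{s\ge 1}|F(\tilde{P}^{(t+s)}) - F(\tilde{P}^{(t)})| \le \frac{L}{1-\alpha}\sqrt{\delta/2}$, completing the argument.

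Honestly, there is no substantive obstacle here: the statement is a formal consequence of Lemma~\ref{lem:runlength} and Theorem~\ref{thm:tail} composed with the Lipschitz hypothesis, so the only thing to be careful about is matching the hypotheses exactly (the run-length condition on the $D_r$'s, and that Assumption~\ref{asmp:contraction} holds within the block for the second part) and noting that $F$ is defined on the simplex $\Delta$ over the relevant common support $\mathcal{I}_t$ so that all the $\tilde{P}^{(\cdot)}$ appearing live in the same domain of $F$. If I wanted to be slightly more careful about the mildly changing support across the pre-stopping steps, I would observe that Lemma~\ref{lem:runlength} already packages the telescoping over those steps into a single $\text{TV}$ bound between the endpoints restricted to a common support, so no additional bookkeeping on the domain of $F$ is needed beyond that.
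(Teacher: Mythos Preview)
Your proposal is correct and matches the paper's own proof exactly: the paper simply states ``Direct application of Lemma~\ref{lem:runlength} and Theorem~\ref{thm:tail} with the Lipschitz property,'' which is precisely the chain of inequalities you spell out. Your extra remarks about the common support are reasonable but go beyond what the paper bothers to say.
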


\begin{proof}
Direct application of Lemma~\ref{lem:runlength} and Theorem~\ref{thm:tail} with the Lipschitz property.
\end{proof}

\subsection{Practical Calibration of $(\delta, \Omega)$}
\label{subsec:app_calibration}

Let $M$ denote the top-2 margin at EDIT's stopping time on a validation set, and $q_{1-\beta}$ its $(1-\beta)$-quantile. Estimate a post-unmasking contraction bound by
\begin{equation}
\hat{\alpha} = \max_{\text{val instances, late } r} \frac{\text{TV}(\tilde{P}^{(r+1)}, \tilde{P}^{(r)})}{\text{TV}(\tilde{P}^{(r)}, \tilde{P}^{(r-1)})}.
\end{equation}

\begin{corollary}[PAC-style guarantee for the final answer]
\label{cor:pac}
Choose $(\delta, \Omega)$ to satisfy
\begin{equation}
\Omega\sqrt{\frac{\delta}{2}} + \frac{1}{1-\hat{\alpha}}\sqrt{\frac{\delta}{2}} \leq \frac{1}{2}q_{1-\beta}.
\end{equation}
Then, with probability at least $1-\beta$ over test instances, the top-1 token at EDIT's stopping time equals the top-1 token obtained by continuing denoising indefinitely (on the fixed support).
\end{corollary}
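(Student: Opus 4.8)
The plan is to read Corollary~\ref{cor:pac} as a probabilistic wrapper around Theorem~\ref{thm:tail}. That theorem already supplies, for a single test instance, a deterministic ``argmax-forever'' certificate on the fixed (post-last-unmasking) support; so the only new ingredient is to verify that the certificate's hypothesis is met with probability at least $1-\beta$ once $(\delta,\Omega)$ satisfy the displayed calibration inequality. I would therefore first establish the per-instance statement, then push it through the quantile.

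\emph{Per-instance reduction.} Fix a test instance and let $t$ be EDIT's stopping step inside a block, occurring after the last unmasking so that the support $\mathcal{I}_t$ is frozen; write $M = m_t$ for the top-2 margin at $t$. EDIT's stopping rule forces $D_{t-\Omega+1},\dots,D_t \le \delta$ deterministically. Assume the local-contraction property of Assumption~\ref{asmp:contraction} holds on this instance with Dobrushin coefficient $\alpha \le \hat\alpha$ --- this is exactly what the validation maximum $\hat\alpha$ is built to certify. Then Theorem~\ref{thm:tail} gives: if
\[
\Omega\sqrt{\tfrac{\delta}{2}} + \tfrac{1}{1-\hat\alpha}\sqrt{\tfrac{\delta}{2}} < \tfrac12 M ,
\]
then $i^*(t+s) = i^*(t)$ for every $s \ge 0$. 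Moreover the same theorem's geometric-tail bound makes $(\tilde P^{(t+s)})_{s}$ Cauchy in total variation, so it converges to a unique limit $\tilde P^{(\infty)}$, and argmax-invariance at all finite $s$ forces $i^*(\infty) = i^*(t)$; that is, the top-1 token at the stopping time agrees with the top-1 token of denoising continued indefinitely.

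\emph{From per-instance to PAC.} Read $q_{1-\beta}$ as the value with $\Pr[M \ge q_{1-\beta}] \ge 1-\beta$ (the correct sense of ``$(1-\beta)$-quantile'' here, since the complementary reading would make the corollary vacuous). The calibration hypothesis $\Omega\sqrt{\delta/2} + \frac{1}{1-\hat\alpha}\sqrt{\delta/2} \le \frac12 q_{1-\beta}$ then implies the margin condition of the previous paragraph on the event $\{M \ge q_{1-\beta}\}$ --- modulo the boundary case $M = q_{1-\beta}$, which is removed either by demanding strictness in the calibration inequality or by noting exact margin ties carry no mass. Since $\Pr[M \ge q_{1-\beta}] \ge 1-\beta$, the certificate's hypothesis holds with probability at least $1-\beta$; combining with the deterministic $D$-bound and the contraction assumption yields $\Pr[\,i^*(t) = i^*(\infty)\,] \ge 1-\beta$, which is the assertion.

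\emph{Main obstacle.} The analytic core is a one-line appeal to Theorem~\ref{thm:tail}; the genuine work is the validation-to-test transfer hidden in ``on a validation set.'' Two points must be made explicit: (i) that $\hat\alpha = \max_{\text{val},\,\text{late }r}\mathrm{TV}(\tilde P^{(r+1)},\tilde P^{(r)})/\mathrm{TV}(\tilde P^{(r)},\tilde P^{(r-1)})$ upper-bounds the Dobrushin coefficient on unseen test instances, i.e.\ Assumption~\ref{asmp:contraction} holds uniformly with $\alpha = \hat\alpha$; and (ii) that the validation quantile of the stopping-time margin is a valid $(1-\beta)$-quantile on test. For a clean population statement one simply posits i.i.d.\ test/validation draws and uses population quantities; for a finite-sample version the honest route is a conformal/order-statistic argument --- with $n$ exchangeable validation margins, the appropriate empirical order statistic is a valid threshold at confidence $1-\beta-O(1/n)$ --- which I would prefer to state separately rather than conflate with the population calibration. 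The strict-versus-non-strict margin inequality and the existence of the $s\to\infty$ limit are the only other loose ends, each dispatched in a sentence.
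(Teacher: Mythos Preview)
Your proof is correct and follows exactly the paper's approach: apply Theorem~\ref{thm:tail} together with the definition of $q_{1-\beta}$. The paper's own proof is a single line (``Apply Theorem~\ref{thm:tail} and the definition of $q_{1-\beta}$''); your version is considerably more careful about the subtleties---strict versus non-strict inequality, validation-to-test transfer of $\hat\alpha$ and the quantile, and the existence of the $s\to\infty$ limit---that the paper leaves implicit.
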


\begin{proof}
Apply Theorem~\ref{thm:tail} and the definition of $q_{1-\beta}$.
\end{proof}

\textbf{Reporting recommendation:} Alongside accuracy and step reductions, report the fraction of test instances that satisfy Corollary~\ref{cor:pac} ("percentage of certified stops"). This quantifies how often EDIT halts with a provable correctness certificate.

\section{Token-Wise EDIT: Per-Token Freezing with Certificates}
\label{app:token}

\subsection{Local Stability Statistics and Rule}
\label{subsec:app_token_stats}

Let $U \in \mathbb{R}^{d \times k}$ denote a fixed reasoning subspace (construction examples provided below). For each visible token $s$ at step $t$, define its subspace coordinates $g_s^{(t)} = U^\top f_s^{(t)} \in \mathbb{R}^k$ and the local distribution
\begin{equation}
Q_s^{(t)}(j) = \frac{\exp(|g_{s,j}^{(t)}|/\tau_{\text{sub}})}{\sum_{\ell=1}^k \exp(|g_{s,\ell}^{(t)}|/\tau_{\text{sub}})}, \quad j = 1, \ldots, k,
\end{equation}
with fixed $\tau_{\text{sub}} > 0$.

Define the per-token KL $D_{s,t} = D_{\text{KL}}(Q_s^{(t)} \parallel Q_s^{(t-1)})$ and the run-length condition: token $s$ is locally stable at $t$ if $D_{s,t-r} \leq \delta_{\text{tok}}$ for $r = 0, \ldots, \Omega_{\text{tok}} - 1$. The token-wise EDIT freezes $s$ at $t$ (setting $f_s^{(t')} \equiv f_s^{(t)}$ for all $t' > t$) whenever this condition holds.

\subsection{Per-Token Certificates}
\label{subsec:app_token_certs}

\begin{lemma}[Local run-length bound]
\label{lem:tok_run}
If $D_{s,t-r} \leq \delta_{\text{tok}}$ for $r = 0, \ldots, \Omega_{\text{tok}} - 1$, then
\begin{equation}
\text{TV}(Q_s^{(t)}, Q_s^{(t-\Omega_{\text{tok}})}) \leq \Omega_{\text{tok}}\sqrt{\frac{\delta_{\text{tok}}}{2}}.
\end{equation}
\end{lemma}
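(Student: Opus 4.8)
The plan is to mimic exactly the proof of Lemma~\ref{lem:runlength}, since Lemma~\ref{lem:tok_run} is literally the per-token analogue with $\tilde P^{(\cdot)}$ replaced by $Q_s^{(\cdot)}$, $\delta$ replaced by $\delta_{\text{tok}}$, and $\Omega$ replaced by $\Omega_{\text{tok}}$. The only structural assumption I need is that the family $\{Q_s^{(r)}\}$ for $r = t-\Omega_{\text{tok}}, \ldots, t$ all live on the same finite index set $\{1,\ldots,k\}$ — which holds by construction, because the subspace dimension $k$ is fixed and does not grow with the denoising step (unlike the token support $\mathcal{I}_t$ in the block-level case, which is precisely why the block-level lemma needs the matched-support renormalization but this one does not). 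So total variation is well defined between any two of these distributions with no renormalization subtlety.

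First I would invoke the triangle inequality for total variation on the fixed simplex $\Delta^{k-1}$: telescoping along the chain $Q_s^{(t-\Omega_{\text{tok}})}, Q_s^{(t-\Omega_{\text{tok}}+1)}, \ldots, Q_s^{(t)}$ gives
\begin{equation}
\text{TV}(Q_s^{(t)}, Q_s^{(t-\Omega_{\text{tok}})}) \;\le\; \sum_{r=1}^{\Omega_{\text{tok}}} \text{TV}\!\left(Q_s^{(t-r+1)}, Q_s^{(t-r)}\right).
\end{equation}
Second, I would bound each of the $\Omega_{\text{tok}}$ summands using Pinsker's inequality $\text{TV}(p,q) \le \sqrt{\tfrac12 D_{\text{KL}}(p\parallel q)}$, applied to $p = Q_s^{(t-r+1)}$, $q = Q_s^{(t-r)}$, so each term is at most $\sqrt{D_{s,t-r+1}/2}$. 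Third, by the run-length hypothesis each relevant $D_{s,\cdot}$ is at most $\delta_{\text{tok}}$ (the indices $t-r+1$ for $r=1,\ldots,\Omega_{\text{tok}}$ are exactly $t, t-1, \ldots, t-\Omega_{\text{tok}}+1$, i.e.\ the indices $t-r'$ for $r'=0,\ldots,\Omega_{\text{tok}}-1$ in the statement's indexing), so each summand is at most $\sqrt{\delta_{\text{tok}}/2}$; summing $\Omega_{\text{tok}}$ copies gives $\Omega_{\text{tok}}\sqrt{\delta_{\text{tok}}/2}$.

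There is essentially no main obstacle here — this is a routine chaining-plus-Pinsker argument identical in form to Lemma~\ref{lem:runlength}. The one point worth a sentence of care is the index bookkeeping: making sure the $\Omega_{\text{tok}}$ consecutive per-step KL terms that appear after telescoping are precisely the ones controlled by the hypothesis $D_{s,t-r}\le\delta_{\text{tok}}$ for $r=0,\ldots,\Omega_{\text{tok}}-1$. Since Pinsker is asymmetric-safe (it bounds TV, which is symmetric, by either direction of KL), I do not even need to worry about whether $D_{\text{KL}}(Q_s^{(r)}\parallel Q_s^{(r-1)})$ or its reverse appears. If desired, one can then remark that this local TV bound feeds directly into per-token argmax-invariance certificates in exact parallel with Theorem~\ref{thm:local_argmax}, but that is beyond the scope of this lemma.
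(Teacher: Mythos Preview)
Your proposal is correct and matches the paper's own proof, which is simply ``Triangle inequality and Pinsker's inequality, as in Lemma~\ref{lem:runlength}.'' Your additional care about the fixed support $\{1,\ldots,k\}$ and the index bookkeeping is accurate and more detailed than what the paper spells out, but the underlying argument is identical.
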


\begin{proof}
Triangle inequality and Pinsker's inequality, as in Lemma~\ref{lem:runlength}.
\end{proof}

Let $j_s^*(t) = \arg\max_j Q_s^{(t)}(j)$ and $m_s(t) = Q_{s,(1)}^{(t)} - Q_{s,(2)}^{(t)}$ be the local top-2 margin.

\begin{theorem}[Dominant subspace-component invariance per token]
\label{thm:tok_inv}
If the condition of Lemma~\ref{lem:tok_run} holds and $\Omega_{\text{tok}}\sqrt{\delta_{\text{tok}}/2} < \frac{1}{2}m_s(t)$, then $j_s^*(t') = j_s^*(t)$ for all $t' \in \{t - \Omega_{\text{tok}}, \ldots, t\}$.
\end{theorem}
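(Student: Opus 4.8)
The plan is to mirror the argument of Theorem~\ref{thm:local_argmax} exactly, but applied to the per-token distributions $Q_s^{(\cdot)}$ on the fixed subspace index set $\{1,\dots,k\}$ instead of the block-level distributions $\tilde P^{(\cdot)}$ on the matched support $\mathcal{I}_t$. First I would fix a target step $t' \in \{t-\Omega_{\text{tok}},\dots,t\}$ and apply Lemma~\ref{lem:tok_run} to the sub-run $D_{s,t-r}\le\delta_{\text{tok}}$ for $r=0,\dots,\Omega_{\text{tok}}-1$, obtaining $\mathrm{TV}(Q_s^{(t)},Q_s^{(t')}) \le \Omega_{\text{tok}}\sqrt{\delta_{\text{tok}}/2}$ (the telescoping sum from $t'$ to $t$ uses at most $\Omega_{\text{tok}}$ of the controlled one-step increments, so the bound holds uniformly over $t'$ in the window).

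Next I would argue by contradiction: suppose $j_s^*(t') \neq j_s^*(t)$. Writing $p = Q_s^{(t)}$ and $q = Q_s^{(t')}$, a change of argmax forces a transfer of probability mass between the two largest coordinates of $p$, so that $\mathrm{TV}(p,q) \ge \tfrac12\big(p_{(1)} - p_{(2)}\big) = \tfrac12 m_s(t)$. This is the same counting observation used inside the proof of Theorem~\ref{thm:local_argmax}: if the coordinate achieving the maximum under $p$ is not the maximizer under $q$, then under $q$ the top coordinate of $p$ must have lost enough mass (or its competitor gained enough) that the $\ell_1$ distance between $p$ and $q$ is at least the $p$-margin $m_s(t)$, hence the TV distance is at least $\tfrac12 m_s(t)$. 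Combining this lower bound with the upper bound from the previous step gives $\tfrac12 m_s(t) \le \Omega_{\text{tok}}\sqrt{\delta_{\text{tok}}/2}$, contradicting the hypothesis $\Omega_{\text{tok}}\sqrt{\delta_{\text{tok}}/2} < \tfrac12 m_s(t)$. Therefore $j_s^*(t') = j_s^*(t)$, and since $t'$ was arbitrary in the window, the claim follows.

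The proof is essentially routine given Lemma~\ref{lem:tok_run}, so there is no serious obstacle; the one point that deserves a careful sentence is the margin/TV inequality — specifically, making precise that $m_s(t)$ should be read as the top-2 margin of $p=Q_s^{(t)}$ (the distribution at the stopping time), which is exactly how $m_s(t)$ is defined, and that the bound $\mathrm{TV}(p,q)\ge\tfrac12 m_s(t)$ holds whenever $\arg\max q \neq \arg\max p$. A clean way to see the latter: let $a=j_s^*(t)$ and $b$ be the second-best coordinate under $p$; if $\arg\max q = c \neq a$, then $q_a \le q_c$ while $p_a = p_{(1)} \ge p_b + m_s(t) \ge p_c + m_s(t)$ when $c=b$, and in general $q_a - q_c \le 0$ whereas $p_a - p_c \ge m_s(t)$, so $|p_a - q_a| + |p_c - q_c| \ge (p_a - q_a) + (q_c - p_c) = (p_a - p_c) - (q_a - q_c) \ge m_s(t)$, giving $\mathrm{TV}(p,q) = \tfrac12\|p-q\|_1 \ge \tfrac12 m_s(t)$. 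This is the only place where a short verification is needed; everything else is an invocation of Lemma~\ref{lem:tok_run} and elementary inequalities, exactly paralleling Theorem~\ref{thm:local_argmax}.
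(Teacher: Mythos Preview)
Your proposal is correct and follows exactly the approach the paper takes: the paper's proof consists of the single line ``Identical to Theorem~\ref{thm:local_argmax} but applied to $Q_s^{(\cdot)}$,'' and your argument instantiates precisely that, using Lemma~\ref{lem:tok_run} for the TV upper bound and the margin/TV lower bound for the contradiction. Your explicit verification of the inequality $\mathrm{TV}(p,q)\ge\tfrac12 m_s(t)$ when $\arg\max q\neq\arg\max p$ is a welcome addition that the paper leaves implicit in both places.
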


\begin{proof}
Identical to Theorem~\ref{thm:local_argmax} but applied to $Q_s^{(\cdot)}$.
\end{proof}

\subsection{Freezing Safety Under Weak Coupling}
\label{subsec:app_token_safety}

We quantify how freezing one token perturbs the global distribution $P^{(t)}$.

\begin{assumption}[Weak cross-token coupling]
\label{asmp:beta}
There exists $\beta_s \geq 0$ such that, if two states at step $r$ differ only in token $s$ by $\Delta_s$ (that is, $f_s^{(r)} \mapsto f_s^{(r)} + \Delta_s$), then their next-step global distributions satisfy
\begin{equation}
\text{TV}(\tilde{P}^{(r+1)}, \tilde{P}'^{(r+1)}) \leq \beta_s \|\Delta_s\|_2.
\end{equation}
This $\beta_s$ can be estimated on validation by finite-difference probes.
\end{assumption}

\begin{assumption}[Post-unmasking contraction]
\label{asmp:alpha}
Within a block after the last unmasking, the Markov operators contract TV with coefficient $\alpha < 1$ as in Assumption~\ref{asmp:contraction}.
\end{assumption}

\begin{theorem}[Safety of freezing token $s$]
\label{thm:freeze_safe}
Suppose token $s$ satisfies the local stability condition at time $t$, and set
\begin{equation}
\varepsilon_s = \max_{r \in \{t-\Omega_{\text{tok}}+1, \ldots, t\}} \|f_s^{(r)} - f_s^{(r-1)}\|_2.
\end{equation}
If token $s$ is frozen at $t$, then for all $u \geq 1$,
\begin{equation}
\text{TV}(\tilde{P}_{\text{frozen}}^{(t+u)}, \tilde{P}_{\text{unfrozen}}^{(t+u)}) \leq \frac{\beta_s}{1-\alpha}\varepsilon_s.
\end{equation}
Consequently, if
\begin{equation}
\Omega\sqrt{\frac{\delta}{2}} + \frac{\beta_s}{1-\alpha}\varepsilon_s < \frac{1}{2}m_t,
\end{equation}
where $m_t$ is the global top-2 margin at $t$, then the global argmax remains unchanged forever (on the fixed support) after freezing token $s$.
\end{theorem}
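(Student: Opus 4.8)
The plan is to bound the divergence between the \emph{frozen} process (which holds $f_s^{(t')}\equiv f_s^{(t)}$ for all $t'>t$) and the \emph{unfrozen} counterfactual process by telescoping over a family of hybrid trajectories, then to damp the accumulated error with the post-unmasking contraction of Assumption~\ref{asmp:alpha}. Throughout I work on the fixed matched support reached after the last unmasking in the block, so every step operator $K_r$ is a well-defined stochastic map on one simplex and all $\tilde P^{(\cdot)}$ live in the same space. For $j\ge t$ let $H_j$ be the process that evolves with the true activations through step $j$ and thereafter holds token $s$ at $f_s^{(j)}$; then $H_t$ is the frozen process, $H_\infty$ the unfrozen one, and $H_j$ and $H_{j+1}$ first differ at the single transition where $H_j$ feeds a held value into the step operator while $H_{j+1}$ feeds the next true activation, a perturbation confined to token $s$ of size $\|f_s^{(j+1)}-f_s^{(j)}\|_2$. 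By Assumption~\ref{asmp:beta} that one transition injects total variation at most $\beta_s\|f_s^{(j+1)}-f_s^{(j)}\|_2$, and by Assumption~\ref{asmp:alpha} this discrepancy contracts by $\alpha$ at each later step; summing these contributions at step $t+u$ via the triangle inequality for TV (terms with $j\ge t+u$ vanish, since then $H_j$ has not yet diverged from $H_\infty$) gives
\begin{equation}
\text{TV}\!\big(\tilde P_{\mathrm{frozen}}^{(t+u)},\tilde P_{\mathrm{unfrozen}}^{(t+u)}\big)\ \le\ \sum_{j=t}^{t+u-1}\beta_s\,\|f_s^{(j+1)}-f_s^{(j)}\|_2\,\alpha^{\,t+u-1-j}.
\end{equation}
Bounding each future per-step token displacement by $\varepsilon_s$ — the natural quantitative form of "token $s$ has stabilized at $t$" — the right-hand side is at most $\beta_s\varepsilon_s\sum_{m\ge 0}\alpha^m=\beta_s\varepsilon_s/(1-\alpha)$, uniformly in $u$, which is the first assertion.

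For the argmax claim, note $\tilde P_{\mathrm{frozen}}^{(t)}=\tilde P^{(t)}$ and use the triangle inequality
\begin{equation}
\text{TV}\!\big(\tilde P_{\mathrm{frozen}}^{(t+u)},\tilde P^{(t)}\big)\ \le\ \text{TV}\!\big(\tilde P_{\mathrm{frozen}}^{(t+u)},\tilde P_{\mathrm{unfrozen}}^{(t+u)}\big)+\text{TV}\!\big(\tilde P_{\mathrm{unfrozen}}^{(t+u)},\tilde P^{(t)}\big),
\end{equation}
where the first term is at most $\beta_s\varepsilon_s/(1-\alpha)$ by the step above and the second is controlled by the tail-robustness estimate of Theorem~\ref{thm:tail} (with Lemma~\ref{lem:runlength} supplying the within-window contribution), so the displacement of the frozen distribution from $\tilde P^{(t)}$ is at most $\Omega\sqrt{\delta/2}+\beta_s\varepsilon_s/(1-\alpha)$. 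Under the stated hypothesis this is strictly below $\tfrac12 m_t$, so the top-2-margin argument of Theorem~\ref{thm:local_argmax} — a change of argmax forces TV $\ge\tfrac12 m_t$ — forbids any change of $i^*$ on the fixed support, giving $i^*(\tilde P_{\mathrm{frozen}}^{(t+u)})=i^*(\tilde P^{(t)})$ for every $u\ge 0$. One may additionally append the per-token certificate of Theorem~\ref{thm:tok_inv} (via Lemma~\ref{lem:tok_run}), which certifies that the dominant subspace coordinate $j_s^*$ of the frozen token has itself been fixed over the stability window, so that holding $f_s$ at $f_s^{(t)}$ is a faithful surrogate for the trajectory's limit.

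The main obstacle is the step that replaces $\|f_s^{(j+1)}-f_s^{(j)}\|_2$ by $\varepsilon_s$ for $j\ge t$: Assumption~\ref{asmp:beta} only controls a \emph{single}-step perturbation, so to telescope it I need the displacement that the frozen process fails to apply at each future step to stay at scale $\varepsilon_s$ rather than growing with $u$. Making this rigorous requires either promoting ``$\sup_{r>t}\|f_s^{(r)}-f_s^{(r-1)}\|_2\le\varepsilon_s$'' to an explicit hypothesis (the faithful formalization of local token stability at the freezing time), or deriving it by coupling the TV contraction of Assumption~\ref{asmp:alpha} to a contraction of the activation increments themselves; once this is secured, the remainder is just the triangle inequality, Pinsker's inequality, and the geometric series already used in Lemmas~\ref{lem:runlength}--\ref{lem:tok_run} and Theorem~\ref{thm:tail}.
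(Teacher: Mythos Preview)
Your approach is essentially the paper's: inject a per-step perturbation of size $\beta_s\varepsilon_s$ via Assumption~\ref{asmp:beta}, damp each injection by powers of $\alpha$ via Assumption~\ref{asmp:alpha}, and sum the geometric series. The paper's proof is a three-line sketch (``One-step deviation is at most $\beta_s\varepsilon_s$\ldots propagating\ldots yields $\sum_{j\ge 0}\alpha^j\beta_s\varepsilon_s$\ldots combine with Theorem~\ref{thm:local_argmax}''); your hybrid-trajectory telescoping $H_t,H_{t+1},\ldots$ is exactly the mechanism that makes that sketch precise.

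Two remarks. First, the gap you flag---that $\varepsilon_s$ is defined only over the \emph{past} window $\{t-\Omega_{\text{tok}}+1,\ldots,t\}$ while the telescoping requires control of $\|f_s^{(j+1)}-f_s^{(j)}\|_2$ for all \emph{future} $j\ge t$---is real, and the paper glosses over it just as you suspect: the sentence ``One-step deviation is at most $\beta_s\varepsilon_s$'' silently assumes exactly the extrapolation you identify. Your proposed remedy (promote $\sup_{r>t}\|f_s^{(r)}-f_s^{(r-1)}\|_2\le\varepsilon_s$ to an explicit hypothesis) is the honest fix; the paper does not supply one.

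Second, for the argmax consequence the paper simply writes ``Combine with Theorem~\ref{thm:local_argmax}'' and stops, whereas you route through the triangle inequality and Theorem~\ref{thm:tail}. That is a sound decomposition, but note the bookkeeping wobble: Theorem~\ref{thm:tail} gives $\text{TV}(\tilde P_{\text{unfrozen}}^{(t+u)},\tilde P^{(t)})\le \frac{1}{1-\alpha}\sqrt{\delta/2}$, not $\Omega\sqrt{\delta/2}$, so the bound your own argument actually produces is $\frac{1}{1-\alpha}\sqrt{\delta/2}+\frac{\beta_s}{1-\alpha}\varepsilon_s$ rather than the $\Omega\sqrt{\delta/2}+\frac{\beta_s}{1-\alpha}\varepsilon_s$ appearing in the statement. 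The paper's ``combine with Theorem~\ref{thm:local_argmax}'' is what literally yields the $\Omega\sqrt{\delta/2}$ term, since that theorem's margin condition is $\Omega\sqrt{\delta/2}<\tfrac12 m_t$; you should cite Theorem~\ref{thm:local_argmax} directly rather than Theorem~\ref{thm:tail} if you want to match the stated constant.
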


\begin{proof}
One-step deviation is at most $\beta_s\varepsilon_s$ by Assumption~\ref{asmp:beta}. Propagating under Assumption~\ref{asmp:alpha} yields a geometric tail bound $\sum_{j \geq 0} \alpha^j \beta_s\varepsilon_s = \frac{\beta_s}{1-\alpha}\varepsilon_s$. Combine with Theorem~\ref{thm:local_argmax}.
\end{proof}

\textbf{Construction of $U$ and practical tuning:} A simple choice is to take $\bar{U}_B$ from Equation~\ref{eq:adamw_evolution_tensor}, compute its left singular vectors, and set $U$ to the top $k \in \{2, 3, 4\}$ vectors. Empirically, $k = 3$ or $k = 4$ provides good stability-efficiency trade-offs. On validation, choose $(\delta_{\text{tok}}, \Omega_{\text{tok}})$ to maximize frozen-token count subject to Theorem~\ref{thm:tok_inv}'s margin condition.

\section{Subspace EDIT: Replacing the Reasoning Vector by a Subspace}
\label{app:subspace}

\subsection{Definition}
\label{subsec:app_subspace_def}

Let $U \in \mathbb{R}^{d \times k}$ with orthonormal columns ($k \geq 1$). Replace the scalar alignment in Equation~\ref{eq:similarity} by a subspace score:
\begin{equation}
\text{Sim}_s^{(t)} = \|U^\top f_s^{(t)}\|_2 \quad \text{or} \quad \text{Sim}_s^{(t)} = \frac{\|U^\top f_s^{(t)}\|_2}{\|f_s^{(t)}\|_2} \quad \text{(subspace cosine)},
\end{equation}
and form $P^{(t)}$ from Equation~\ref{eq:distribution} with the same fixed in-block temperature $\tau_{\text{blk}}$. All other components of EDIT (matched-support renormalization, KL divergence, run-length rule) remain unchanged.

\subsection{Inherited Guarantees}
\label{subsec:app_subspace_guarantees}

\begin{proposition}[Guarantees are shape-agnostic in the similarity]
\label{prop:subspace}
The statements and proofs of Lemma~\ref{lem:runlength}, Theorems~\ref{thm:local_argmax}–\ref{thm:lipschitz}, and Corollary~\ref{cor:pac} hold verbatim under the subspace similarity above.
\end{proposition}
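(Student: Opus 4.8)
The plan is to observe that Proposition~\ref{prop:subspace} is a meta-statement about \emph{which properties of the similarity map $s \mapsto \mathrm{Sim}_s^{(t)}$ are actually used} in the proofs of Lemma~\ref{lem:runlength} through Corollary~\ref{cor:pac}. The core point: every one of those arguments takes as its only input the sequence of distributions $\tilde{P}^{(t)}$ on the matched support $\mathcal{I}_t$, together with the run-length KL hypothesis $D_{t-\Omega+1},\ldots,D_t \le \delta$; none of them ever touches the internal formula by which $P^{(t)}$ was formed from the activations. So the proof is essentially a bookkeeping verification that the subspace construction yields the same \emph{type} of object — a valid probability distribution on $\mathcal{S}_t$, obtained by a fixed-temperature softmax of real-valued scores, then restricted and renormalized to $\mathcal{I}_t$ — so that $D_t$ is well-defined and all downstream manipulations go through unchanged.

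First I would fix notation: under the subspace variant, $\mathrm{Sim}_s^{(t)} = \|U^\top f_s^{(t)}\|_2$ (or its cosine normalization), which is a finite real number for each $s \in \mathcal{S}_t$ since $U$ and $f_s^{(t)}$ are finite-dimensional; hence $P^{(t)}$ from Equation~\ref{eq:distribution} is a strictly positive probability vector on $\mathcal{S}_t$ with the fixed in-block temperature $\tau_{\text{blk}}$. Next I would note that the matched-support renormalization in Equation~\ref{eq:renorm} is well-defined because $\sum_{i \in \mathcal{I}_t} P^{(t)}(i) > 0$ (strict positivity of softmax), so $\tilde{P}^{(t)}$ and $\tilde{P}^{(t-1)}$ are genuine distributions on $\mathcal{I}_t$ with common support, and $D_t = D_{\text{KL}}(\tilde{P}^{(t)} \parallel \tilde{P}^{(t-1)})$ is finite. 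At this point every hypothesis of Lemma~\ref{lem:runlength}, Theorems~\ref{thm:local_argmax}--\ref{thm:lipschitz}, and Corollary~\ref{cor:pac} is phrased purely in terms of $\{\tilde{P}^{(r)}\}$, $\delta$, $\Omega$, the margins $m_t$, $m_s(t)$, the Dobrushin/contraction coefficient $\alpha$ (Assumption~\ref{asmp:contraction}), and Lipschitz constants $L$ — all of which are defined independently of how $\mathrm{Sim}$ is computed. Therefore I would simply re-run each proof verbatim: Pinsker plus the triangle inequality for TV (Lemma~\ref{lem:runlength}); the mass-transport lower bound when an argmax changes (Theorem~\ref{thm:local_argmax}); the geometric-tail contraction estimate (Theorem~\ref{thm:tail}); the Lipschitz composition (Theorem~\ref{thm:lipschitz}); and the quantile calibration (Corollary~\ref{cor:pac}). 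Since Assumption~\ref{asmp:contraction} is itself a hypothesis on the Markov operators $K_r$ acting on distributions over the fixed support — not on the activation-to-score map — invoking it under the subspace variant is legitimate exactly when it was legitimate before.

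The one substantive thing worth flagging, rather than a real obstacle, is the scope of the claim: it is a statement about the \emph{logical form} of the earlier results, so the ``proof'' is a short argument that the earlier proofs are agnostic to the similarity shape, not a fresh derivation. The mild subtlety is that the contraction hypothesis (Assumption~\ref{asmp:contraction}) and the coupling hypothesis (Assumption~\ref{asmp:beta}, if one also wanted the token-wise results, though those are not part of this proposition) concern the dynamics induced on $\tilde{P}^{(\cdot)}$, and one must be slightly careful that changing the similarity changes the actual numerical values of $\tilde{P}^{(t)}$ and hence could in principle change whether $\alpha < 1$ holds empirically — but Proposition~\ref{prop:subspace} only asserts that \emph{if} the stated assumptions hold (now interpreted with the subspace-induced distributions), \emph{then} the conclusions follow, which is precisely what the verbatim re-run establishes. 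I expect the main (very minor) drafting obstacle to be stating this cleanly enough that it is evidently a proof and not a hand-wave: the right move is to isolate, in one sentence, the interface — ``the proofs of Lemma~\ref{lem:runlength}--Corollary~\ref{cor:pac} use only that $(\tilde{P}^{(r)})_r$ is a sequence of probability distributions on the matched supports with $D_r \le \delta$ on the run, plus the named assumptions on the induced Markov operators'' — and then observe the subspace construction supplies exactly such a sequence.
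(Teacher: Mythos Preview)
Your proposal is correct and follows essentially the same approach as the paper: the paper's own proof is a two-sentence observation that the guarantees depend only on the distributions $\tilde{P}^{(\cdot)}$ and their KL/TV relations, with $\mathrm{Sim}_s^{(t)}$ entering only through the fixed-temperature definition of $P^{(t)}$. Your write-up is considerably more detailed (verifying well-definedness of the softmax and renormalization, and flagging the status of Assumption~\ref{asmp:contraction}), but the core idea is identical.
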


\begin{proof}
The guarantees depend only on the distributions $\tilde{P}^{(\cdot)}$ and their KL/TV relations. The construction of $\text{Sim}_s^{(t)}$ enters only through $P^{(t)}$'s definition with a fixed temperature.
\end{proof}

\textbf{Constructing $U$:} Two practical choices are available. First, perform SVD of $\bar{U}_B$ (Equation~\ref{eq:adamw_evolution_tensor}) and retain the top $k$ left singular vectors. Second, use CCA between step-averaged preconditioned gradients and activations $\{f_s\}$ in the chosen module. Ablations suggest small $k$ values ($2$–$4$) are sufficient and can stabilize earlier than single-vector approaches, potentially offering improved efficiency-accuracy trade-offs.

\section{EDIT Algorithm}
\label{app:edit_algo}

Algorithm~\ref{alg:edit} presents the full EDIT procedure, while Figure~\ref{fig:EDIT_diagram} provides a visual illustration of its workflow.

\begin{algorithm}
\caption{EDIT: Early Diffusion Inference Termination}
\label{alg:edit}
\begin{algorithmic}[1]
\REQUIRE Input sequence; thresholds $\delta$, $\Omega$; block temperature $\tau_{\text{blk}}$; fine-tuning steps $\mathcal{K}$
\ENSURE Generated text with adaptive early termination
\STATE \textbf{// Phase 1: Training-Time Metadata Extraction (one-time)}
\FOR{each fine-tuning step $k = 1$ to $\mathcal{K}$}
    \STATE Update AdamW moments $M_{k,B}$, $V_{k,B}$ using Eq.~\ref{eq:adamw_moments}
    \STATE Compute update tensor $U_{k,B}$ using Eq.~\ref{eq:adamw_update}
\ENDFOR
\STATE Compute AdamW evolution tensor $\bar{U}_B$ using Eq.~\ref{eq:adamw_evolution_tensor}
\STATE Reduce to feature vector $\mathbf{u}$ using Eq.~\ref{eq:reduction}
\STATE Store $\mathbf{u}$ as metadata for inference use
\STATE \textbf{// Phase 2: Inference-Time Early Termination (uses precomputed $\mathbf{u}$)}
\FOR{each diffusion block $b = 1$ to $B$}
    \STATE Initialize visible set $\mathcal{S}_1$ by unmasking schedule
    \STATE Compute activations $\mathbf{f}_s^{(1)}$ for $s \in \mathcal{S}_1$
    \STATE Compute initial distribution $P^{(1)}$ using Eq.~\ref{eq:similarity} and \ref{eq:distribution}
    \STATE Set stability counter $c \leftarrow 0$
    \FOR{denoising step $t = 2$ to $T_b$}
        \STATE Update visible set $\mathcal{S}_t$ according to unmasking schedule
        \STATE Compute activations $\mathbf{f}_s^{(t)}$ for $s \in \mathcal{S}_t$
        \STATE Compute distribution $P^{(t)}$ using Eq.~\ref{eq:similarity} and \ref{eq:distribution}
        \STATE Set intersection $\mathcal{I}_t = \mathcal{S}_{t-1} \cap \mathcal{S}_t$
        \STATE Renormalize to $\tilde{P}^{(t)}$, $\tilde{P}^{(t-1)}$ using Eq.~\ref{eq:renorm}
        \STATE Compute $D_t = D_{\text{KL}}(\tilde{P}^{(t)} \parallel \tilde{P}^{(t-1)})$ using Eq.~\ref{eq:kl_divergence}
        \IF{$D_t < \delta$}
            \STATE $c \leftarrow c + 1$
        \ELSE
            \STATE $c \leftarrow 0$
        \ENDIF
        \IF{$c \geq \Omega$}
            \STATE \textbf{break} // Early termination for block $b$
        \ENDIF
    \ENDFOR
\ENDFOR
\end{algorithmic}
\end{algorithm}

\begin{figure}[!t]
\centering
\includegraphics[width=0.99\textwidth]{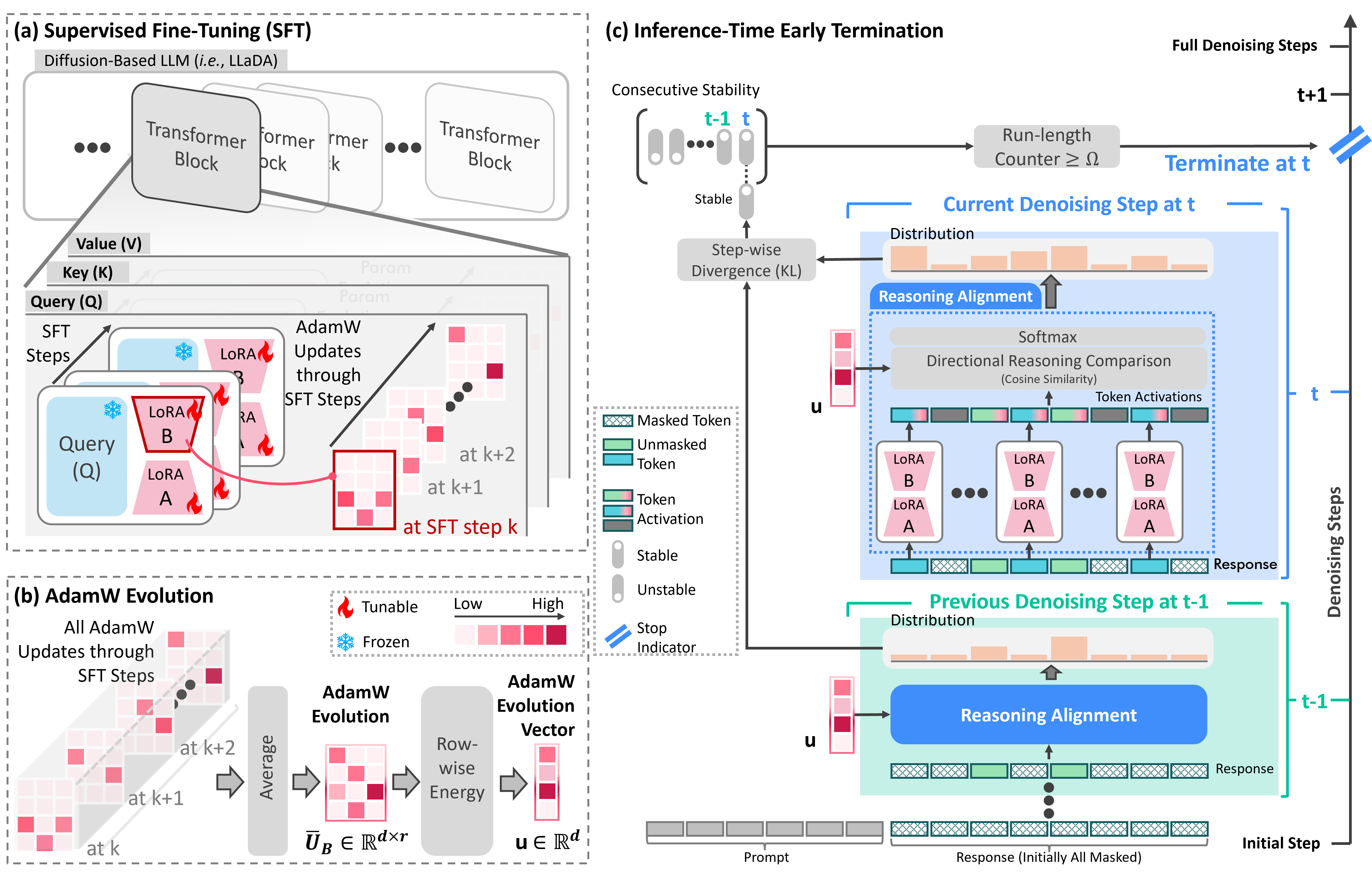}
\caption{
Overview of EDIT.
\textit{(a) Supervised Fine-Tuning (SFT)}: AdamW moment estimates track LoRA-B updates across steps, where some parameters consistently receive strong, directionally-aligned updates that encode reasoning patterns.
\textit{(b) AdamW Evolution}: Aggregating these updates yields a compact evolution vector $\mathbf{u}$ that encodes reasoning-relevant parameter importance.
\textit{(c) Inference-Time Early Termination}: At inference, token activations are compared with the preserved evolution vector $\mathbf{u}$; reasoning alignment is monitored via cosine similarity and KL divergence across steps. Once stability persists for consecutive steps, termination occurs before full denoising, reducing cost without loss of quality.
}
\label{fig:EDIT_diagram}
\end{figure}

\section{Gradient-Based Justification for Early Termination}
\label{app:gradview_edit}

To determine when denoising steps can be truncated safely, we compare inference pseudo-gradients with the SFT gradients $G_{k,B}$ on LoRA-B layers. During inference, at each denoising step $t \in \{1,\dots,T_b\}$, the model produces logits $z_t(s)$ for every token position $s$ in the block of length $L$.
Since no ground-truth labels are available at inference time, the only informative signal comes from the evolution of predictive distributions across steps. We denote $p_\theta(z_t(s))$ as the model’s prediction for token $s$ at step $t$ and $p_\theta(z_{t+1}(s))$ as the refined prediction at step $t{+}1$. The KL divergence between them measures how much the model’s belief changes across steps, with larger values indicating ongoing refinement and smaller values indicating stabilization.
We therefore define the pseudo-gradient as
\begin{equation}
\tilde{G}_{t,B}
\;=\;
\nabla_{B}
\sum_{s \in S_{t+1}}
\mathrm{KL}\left(
p_\theta\big(z_t(s)\big)
\,\big|\,
p_\theta\big(z_{t+1}(s)\big)
\right),
\end{equation}
where $S_{t+1}$ denotes the visible token set at step $t+1$.
We compute the pseudo-gradient by evaluating the KL divergence between consecutive predictive distributions at steps $t$ and $t+1$, restricted to $S_{t+1}$ so that only effective (unmasked) tokens contribute. Backpropagating this divergence through the LoRA-B parameters yields $\tilde{G}_{t,B}$, and we record its root-mean-square (RMS) magnitude as a scalar summary for step $t$. Repeating this across all denoising steps produces a trajectory of pseudo-gradients that characterizes the sensitivity of inference dynamics.

For training, we take the gradients $G_{k,B}$ observed at each SFT step $k$, compute their RMS magnitudes, and summarize them by a mean $\mu_{\text{SFT}}$ and a variance band. These gradients fluctuate around the mean within a bounded band, defining the stable regime in which the model was optimized. By overlaying the inference pseudo-gradients $\tilde{G}_{t,B}$ with this SFT reference, we obtain a principled test of alignment. 
Convergence is declared when pseudo-gradients (1) approach $\mu_{\text{SFT}}$ and (2) remain within this band, beyond which further denoising adds cost without benefit.
Initially, $\tilde{G}_{t,B}$ deviates from the SFT regime, but after several iterations it reaches a convergence point $t_{\text{conv}} = \arg\min_t |\mathrm{RMS}(\tilde{G}_{t,B}) - \mu_{\text{SFT}}|$, after which the pseudo-gradients oscillate around the SFT mean $\mu_{\text{SFT}}$ in a manner statistically consistent with $G_{k,B}$. This indicates that inference has entered the same training-consistent regime, and further denoising steps add computation without providing additional alignment benefit.

Empirically, on the GPQA benchmark (sequence length 128) we analyze the second diffusion block and observe in Figure \ref{fig:app_grad_analysis} that the pseudo-gradients reach a convergence point (marked by the yellow $\color{DarkYellow}{\blacktriangledown}$) at the 19-th denoising step. Beyond this point, they fluctuate stably around the SFT mean, indicating entry into the training-consistent regime. Terminating at $\sim$20 steps per block therefore preserves fidelity while reducing computation, consistent with Table \ref{tab:efficiency}, which shows an average of 40.3 steps for two diffusion blocks (\ie, $\sim$20 steps each) on GPQA, and with Table~\ref{tab:accuracy}, which confirms that accuracy remains competitive.

After the convergence point at step 19, the pseudo-gradients stay within the SFT variance band, oscillating around the mean, but a spike appears near step 25. This behavior may reflect the dynamics of late denoising. At this stage, most unmasked (visible) tokens have stabilized, while updates focus on the remaining masked positions. These late updates often involve low-information elements such as function words, punctuation, or minor phrasing, which carry little semantic weight but can still trigger abrupt shifts in the predictive distribution. Importantly, because these refinements concern filler-like tokens, they do not compromise the earlier convergence, which already ensures fidelity comparable to full denoising.

\vspace{15pt}
\begin{figure}[h]
\centering
\includegraphics[width=0.99\textwidth]{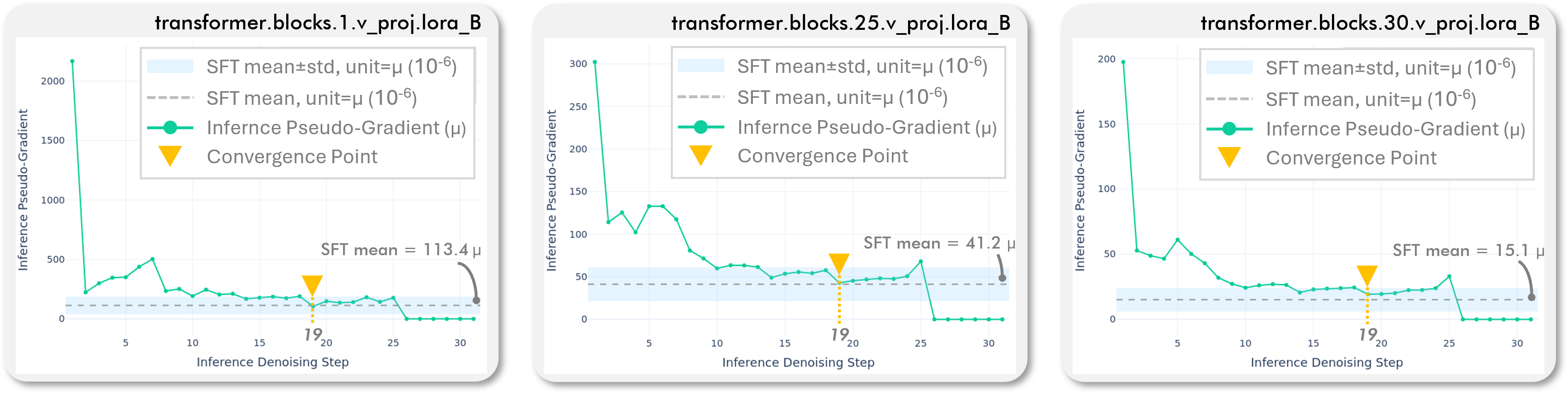}
\caption{
Gradient-based analysis of training–inference alignment on GPQA (sequence length 128, second diffusion block). The curve shows the RMS pseudo-gradients $\tilde{G}_{t,B}$ across denoising steps, compared against the SFT gradient mean (dashed) and variance band (shaded). The convergence point (yellow \textcolor{DarkYellow}{$\blacktriangledown$}) occurs at the 19-th step, after which pseudo-gradients oscillate stably around the SFT mean. This alignment indicates that terminating at $\sim$20 steps per block maintains fidelity comparable to full denoising while reducing computation, consistent with Table \ref{tab:efficiency} (40.3 steps for two blocks).
}
\label{fig:app_grad_analysis}
\end{figure}

\vspace{15pt}

We also show additional examples in Figure \ref{fig:grad_analysis_countdown} to \ref{fig:grad_analysis_gsm8k}, which presents Countdown, Sudoku, MATH500, and GSM8K. Their convergence points occur at the 20-th, 18–19-th, 19-th, and 23-rd steps respectively. In each task, terminating around these points preserves fidelity while reducing computation, consistent with the average step counts reported in Table \ref{tab:efficiency} (40.4, 38.3, 38.1, and 42.8 steps for two blocks). These results demonstrate that pseudo-gradient convergence provides a consistent signal of reasoning step completion across diverse benchmarks.

\begin{figure}[h]
\centering
\includegraphics[width=\textwidth]{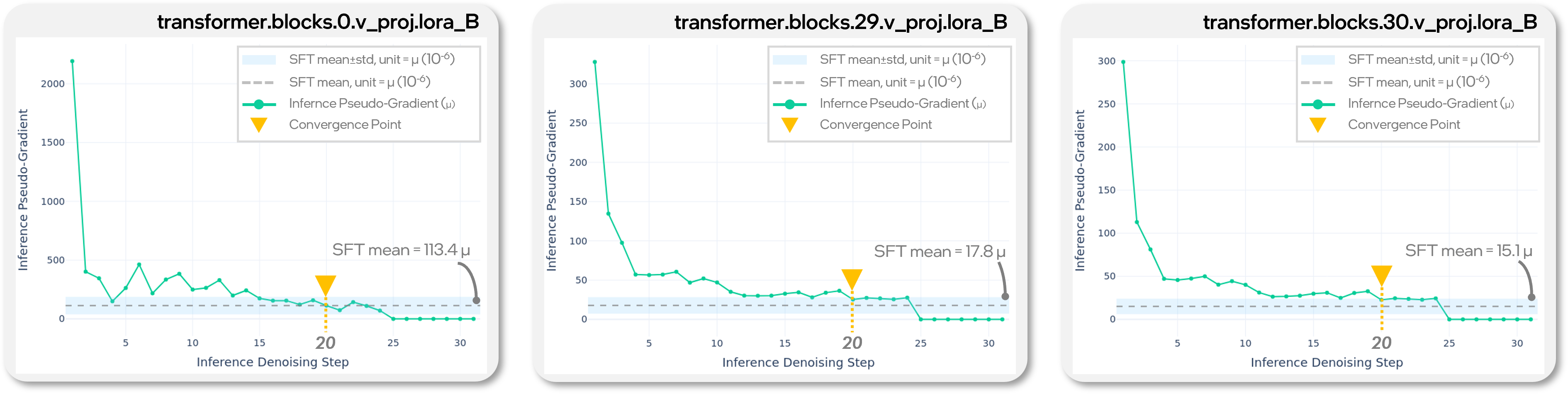}
\caption{Countdown (sequence length 128, second diffusion block). The convergence point (yellow \textcolor{DarkYellow}{$\blacktriangledown$}) occurs at the 20-th step. Terminating at $\sim$20 steps per block maintains fidelity comparable to full denoising while reducing computation, consistent with Table \ref{tab:efficiency} (40.4 steps for two blocks).}
\label{fig:grad_analysis_countdown}
\end{figure}

\begin{figure}[h]
\centering
\includegraphics[width=\textwidth]{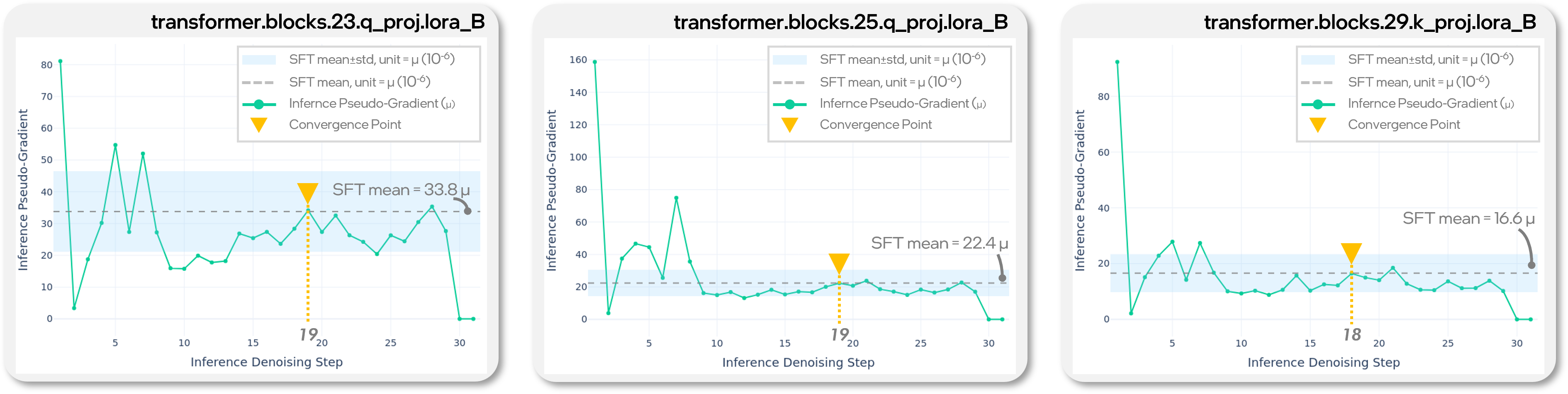}
\caption{Sudoku (sequence length 128, second diffusion block). The convergence point (yellow \textcolor{DarkYellow}{$\blacktriangledown$}) occurs at the 18-th and 19-th steps. Terminating at $\sim$19 steps per block maintains fidelity comparable to full denoising while reducing computation, consistent with Table \ref{tab:efficiency} (38.3 steps for two blocks).}
\label{fig:grad_analysis_sudoku}
\end{figure}

\begin{figure}[h]
\centering
\includegraphics[width=\textwidth]{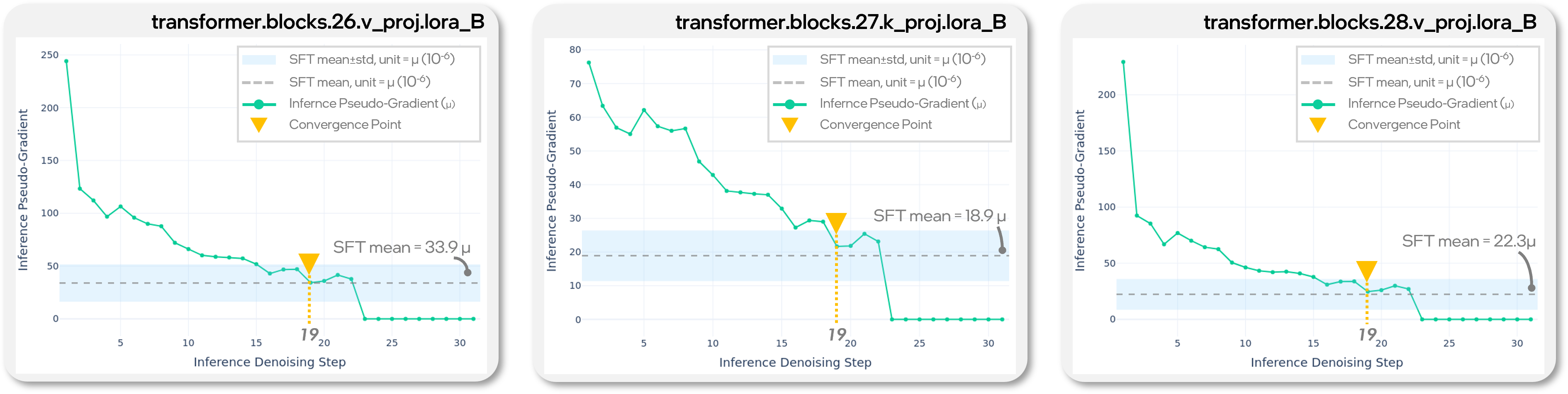}
\caption{MATH500 (sequence length 128, second diffusion block). The convergence point (yellow \textcolor{DarkYellow}{$\blacktriangledown$}) occurs at the 19-th step. Terminating at $\sim$19 steps per block maintains fidelity comparable to full denoising while reducing computation, consistent with Table \ref{tab:efficiency} (38.1 steps for two blocks).}
\label{fig:grad_analysis_math}
\end{figure}

\begin{figure}[h]
\centering
\includegraphics[width=\textwidth]{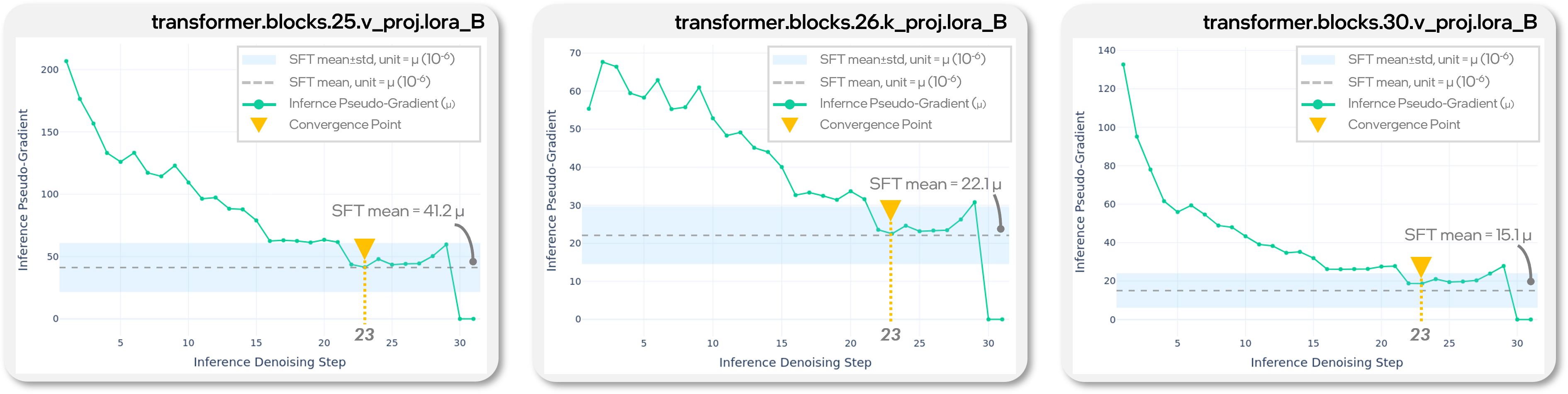}
\caption{GSM8K (sequence length 128, second diffusion block). The convergence point (yellow \textcolor{DarkYellow}{$\blacktriangledown$}) occurs at the 23-rd step. Terminating at $\sim$22 steps per block maintains fidelity comparable to full denoising while reducing computation, consistent with Table \ref{tab:efficiency} (42.8 steps for two blocks).}
\label{fig:grad_analysis_gsm8k}
\end{figure}

\end{document}